\documentclass{article}

\usepackage{microtype}
\usepackage{graphicx}
\usepackage{subcaption}
\usepackage{booktabs} 

\usepackage{hyperref}

\usepackage[preprint]{icml2026}

\usepackage{amsmath}
\usepackage{amssymb}
\usepackage{mathtools}
\usepackage{amsthm}

\usepackage[capitalize,noabbrev]{cleveref}

\usepackage[T1]{fontenc}
\usepackage{times}
\usepackage{amsmath, amssymb, amsthm}
\usepackage{mathtools}
\usepackage{bbm}
\usepackage{dsfont}

\usepackage[dvipsnames]{xcolor}
\definecolor{darkpink}{RGB}{199,21,140}

\usepackage{graphicx}

\usepackage{booktabs, array}
\usepackage{multirow}
\usepackage{tikz}
\usetikzlibrary{tikzmark}

\usepackage{algorithm}
\usepackage{algorithmic}
\usepackage{eqparbox}
\renewcommand{\algorithmiccomment}[1]{\hfill\eqparbox{COMMENT}{(#1)}}

\usepackage[dvipsnames]{xcolor}

\usepackage{listings}
\usepackage{fancyvrb}
\fvset{fontsize=\small}

\lstdefinestyle{py}{
  language=Python,
  basicstyle=\ttfamily\footnotesize,
  numbers=left,
  numberstyle=\tiny,
  stepnumber=1,
  numbersep=8pt,
  frame=single,
  breaklines=true,
  showstringspaces=false,
  tabsize=4
}

\definecolor{citecolor}{RGB}{0,102,204}
\definecolor{linkcolor}{RGB}{190,105,30}
\definecolor{urlcolor}{RGB}{199,21,133}

\usepackage[all]{hypcap}
\hypersetup{citecolor=citecolor}
\hypersetup{linkcolor=linkcolor}
\hypersetup{urlcolor=urlcolor}

\lstdefinestyle{mystyle}{
    commentstyle=\color{OliveGreen},
    numberstyle=\tiny\color{black!60},
    stringstyle=\color{BrickRed},
    basicstyle=\ttfamily\scriptsize,
    breakatwhitespace=false,
    breaklines=true,
    captionpos=b,
    keepspaces=true,
    numbers=none,
    numbersep=5pt,
    showspaces=false,
    showstringspaces=false,
    showtabs=false,
    tabsize=2
}
\newcommand{\bB}{\mathbf{B}}

\newcommand{\calL}{{\mathcal{L}}}
\newcommand{\calM}{{\mathcal{M}}}

\newcommand{\calR}{{\mathcal{R}}}

\newcommand{\bbR}{\mathbb{R}}

\theoremstyle{plain}

\theoremstyle{definition}

\theoremstyle{remark}

\newcommand{\grad}{\nabla}

\DeclareMathOperator{\unifdist}{Unif}

\newcommand{\mathwrap}[1]{\texorpdfstring{#1}{TEXT}}

\def\(#1\){\begin{equation}\begin{aligned}#1\end{aligned}\end{equation}}

\def\sR{{\mathbb{R}}}

\newcommand{\E}{\mathbb{E}}

\newcommand{\del}{\partial}

\DeclareMathOperator{\trace}{tr}
\DeclareMathOperator{\diver}{div}
\DeclareMathOperator{\SPD}{SPD}

\DeclareMathOperator{\diag}{diag}
\DeclareMathOperator{\vol}{vol}
\DeclareMathOperator{\Sym}{Sym}

\usepackage[acronym,nowarn,section,nogroupskip,nonumberlist]{glossaries}
\glsdisablehyper{}

\newacronym{ode}{\textsc{ode}}{Ordinary Differential Equation}
\newacronym{cnf}{\textsc{cnf}}{Continuous Normalizing Flow}
\newacronym{afm}{\textsc{afm}}{\emph{Augmented Flow Matching}}
\newacronym{gan}{\textsc{gan}}{Generative Adversarial Network}
\newacronym{sde}{\textsc{sde}}{Stochastic Differential Equation}
\newacronym{ot}{\textsc{ot}}{Optimal Transport}

\usepackage{wrapfig}
\usepackage{enumitem}

\usepackage{placeins}

\usepackage{amsthm}
\theoremstyle{plain}
\newtheorem{theorem}{Theorem}[section]

\theoremstyle{definition}

\theoremstyle{remark}
\newtheorem{remark}[theorem]{Remark}

\usepackage[disable,textsize=tiny]{todonotes}

\icmltitlerunning{Riemannian Diffusion Models on General Manifolds via Physics-Informed Neural Networks}

\begin{document}

\twocolumn[
  \icmltitle{Riemannian Diffusion Models on General Manifolds \mathwrap{\\} via Physics-Informed Neural Networks}

  \hypersetup{pdftitle={Riemannian Diffusion Models on General Manifolds via Physics-Informed Neural Networks}}



  \icmlsetsymbol{equal}{*}

  \begin{icmlauthorlist}
    \icmlauthor{Gyeonghoon Ko}{kaist}
    \icmlauthor{Juho Lee}{kaist}

  \end{icmlauthorlist}

  \icmlaffiliation{kaist}{Korea Advanced Institute of Science and Technology, Korea}

  \icmlcorrespondingauthor{Gyeonghoon Ko}{kog@kaist.ac.kr}
  \icmlcorrespondingauthor{Juho Lee}{juholee@kaist.ac.kr}

  \icmlkeywords{Machine Learning, ICML}

  \vskip 0.3in
]



\printAffiliationsAndNotice{}  

\begin{abstract}
Riemannian diffusion models generalize score-based generative modeling to manifold-supported data via stochastic diffusion equations on the manifold.
However, training requires sampling from and differentiating the manifold heat kernel, which is rarely available in closed form beyond a few highly symmetric manifolds.
We propose a general approach that approximates the heat kernel by directly solving the manifold heat equation with a physics-informed neural network (PINN).
Given an explicit manifold specification, we choose a coordinate system, derive the corresponding heat (Fokker--Planck) equation and a short-time asymptotic approximation, and then train a PINN to learn the log heat kernel.
The resulting surrogate enables both forward noising (heat-kernel sampling) and conditional-score evaluation for denoising score matching.
We demonstrate the method on diverse manifolds including $S^2$, $SO(3)$, $\mathrm{SPD}(n)$, and permutation-quotiented point clouds.
\end{abstract}

\section{Introduction}\label{sec:intro}

Diffusion models have become a powerful and widely used framework for generative modeling. They define a forward process that gradually corrupts data with noise, and learn a reverse process that transforms noise back into samples, enabling high-quality generation across many domains \citep{ho2020denoising, song2020score}. However, a growing number of applications involve data that do not naturally live in $\mathbb{R}^n$: typical examples include spheres, special orthogonal groups $SO(n)$ and spaces of symmetric positive definite (SPD) matrices. Treating such objects in Euclidean space via some embeddings or projections can break intrinsic constraints, distort distances and volumes, and create coordinate-dependent artifacts. In contrast, modeling directly on the underlying manifold respects the geometry by construction, aligns the noise process with geodesic structure, and yields a principled inductive bias that can improve both sample quality and generalization.

On a Riemannian manifold $(\mathcal{M},g)$, a canonical drift-free forward process is Riemannian Brownian motion, whose time-marginal density evolves according to the heat equation $\partial_t p_t=\Delta^{\mathcal{M}}p_t$. Following the drift-free formulation of Riemannian score-based generative modeling of \citet{de2022riemannian}, training and sampling require access to the manifold heat kernel (the transition density of Brownian motion), in particular to sample from it and evaluate its log-density/gradients. While several approximations have been proposed—most notably short-time asymptotics and spectral expansions—these can be accurate only in restricted regimes or rely on strong structural assumptions, limiting their practical scope \citep{lou2023scaling}.

Instead, we adopt a simple viewpoint: \emph{the heat kernel is the solution of a known partial differential equation (PDE) with a known initial condition}. For each fixed $x_0$, it satisfies
\begin{equation}\label{eq:fp-eqn-intro}
\partial_t p(t,x)=\Delta_x^{\mathcal{M}}p(t,x),\qquad p(t,\cdot)\to \delta_{x_0}\ \text{as }t\downarrow 0.    
\end{equation}
We approximate this object by learning a numerical surrogate of the heat equation. Concretely, we (i) express $\Delta_{\mathcal{M}}$ in explicit coordinates obtained from an embedding or quotient representation, (ii) use a principled short-time asymptotic to provide an approximate initializer at a small $t_0>0$, and (iii) train a physics-informed neural network (PINN) to minimize the PDE residual and match the initializer \citep{raissi2019physics, wang2023expert}. The resulting surrogate provides both heat-kernel evaluation (and its gradient) and supports heat-kernel-based sampling, enabling drift-free Riemannian diffusion on manifolds beyond the few simple ones.

We evaluate our method across diverse geometries and applications: spherical climate-event density estimation on $S^2$ (volcanoes, earthquakes, floods, wildfires), synthetic mixtures on $SO(3)$, conditional traffic-flow generation on $\SPD(10)$ from the NYC taxi dataset, and class-conditional EEG brain-connectivity generation on $\SPD(n)$.
Moreover, for molecule generation, we go beyond the standard $\sR^{k\times n}$ point-cloud view and formulate the task on the quotient manifold $\sR^{k\times n}/S_n$, explicitly modding out atom permutations. This leads to a novel diffusion process that evolves over permutation-equivalence classes, so atom identities are shared across molecules.
Overall, the results show that PDE-based heat-kernel approximation enables practical drift-free Riemannian diffusion beyond simple geometries.

\section{Related Work}\label{sec:related}

\paragraph{Riemannian diffusion models.}
Riemannian diffusion models generalize Euclidean diffusion/score-based generative modeling to data on a smooth manifold $\calM$ by defining a forward diffusion SDE on $\calM$ and learning a reverse-time SDE.
RSGM \citep{de2022riemannian} provides a general formulation and proposes practical training objectives, including Denoising Score Matching (DSM) and Implicit Score Matching (ISM).
Complementary likelihood-oriented formulations \citep{huang2022riemannian} derive a variational framework for an evidence lower bound (ELBO) on manifolds, which requires computing quantities such as the Riemannian divergence.
Recent work improves scalability by exploiting additional geometric structure (e.g., symmetric spaces), yielding more accurate and efficient evaluation and approximation of manifold diffusion quantities, including heat-kernel-related terms, in higher dimensions \citep{lou2023scaling, mangoubi2025efficient}.
Relatedly, \citet{thornton2022riemannian} propose Riemannian Diffusion Schr\"odinger Bridge (RDSB), extending diffusion Schr\"odinger bridges to compact Riemannian manifolds for generative modeling.

\paragraph{Flow matching on Riemannian manifolds.}
Flow matching is an alternative strategy for building generative models on Riemannian manifolds.
It trains a continuous-time generative model by learning a time-dependent vector field whose flow transports a simple base distribution to the data distribution, avoiding likelihood computation and divergence estimation.
\citet{chen2023flow} extend conditional flow matching to general geometries via Riemannian Flow Matching (RFM), where the training targets are constructed by geodesic interpolation.
This formulation requires only $\exp$ and $\log$ maps, yielding a scalable and simulation-free objective.
This framework has been adopted in geometric generative modeling, including $\mathrm{SE}(3)$-equivariant protein modeling and torus-valued torsion-angle generation \citep{yim2023fast, bose2023se, huguet2024sequence, lin2024ppflow}.

\paragraph{Applications of generative modeling on Riemannian manifolds.}
Manifold-aware generative models are motivated by geometric constraints and symmetries, especially in scientific and 3D data.
Typical examples include distributions on $S^2$ (e.g., geophysical event locations), rotations on $SO(3)$ for 3D orientation modeling, and rigid-body frames on $SE(3)$ in robotics and protein structure generation \citep{de2022riemannian, leach2022denoising, urain2022se, yim2023fast}.
In molecular and materials modeling, periodic degrees of freedom such as internal angles and torsions live on products of circles (tori), motivating manifold generative modeling on tori for conformer generation and related tasks \citep{jing2022torsional, miller2024flowmm}.
Structured matrix manifolds such as $\mathrm{SPD}(n)$ arise in domains including neuroimaging and traffic analytics, motivating diffusion-style generative modeling directly in SPD geometry \citep{li2024spd, collas2025riemannian, chen2025connectomediffuser}.
These applications motivate scalable manifold generative methods that remain applicable when heat kernels are not analytically tractable.

\section{Background}\label{sec:background}

\paragraph{Definitions and notations.}
Let $\calM$ be a complete, connected and boundaryless smooth Riemannian manifold equipped with a Riemannian metric $g$.
The metric $g$ is a smooth rank-2 tensor bundle; for each $x \in \calM$, the metric $g_x:T_x\calM \times T_x\calM \rightarrow \sR$ is a positive definite bilinear function.
In a local coordinate chart $(x^1,\dots,x^n)$, the metric is represented by the matrix $g=(g_{ij})$, and we write $|g| := \det(g_{ij})$.
At $x \in \calM$, the exponential map $\exp_x:T_x\calM \rightarrow \calM$ is defined by transporting the tangent vector in $T_x \calM$ along the geodesic starting from $x$.
The inverse of the exponential map $\log_x = \exp_x^{-1}$ is also defined, although it may not be defined globally.
A vector field on $\calM$ can be written in coordinates with the local frame $\{\del_i\}_{i=1,\cdots,n}$.
Throughout the paper, we use the Einstein summation convention without raising or lowering indices, i.e. the indices appearing twice in the tensor expression are regarded as being summed, e.g., a standard matrix multiplication is written as $(AB)_{ik} = A_{ij}B_{jk}$.

The metric $g$ induces the gradient $\nabla f$ of a smooth function $f:\calM\to\sR$ via $g(\nabla f, V) = df(V)$ for all vector fields $V$.
The divergence of a vector field $V = V_i\partial_i$ is given in local coordinates by $\mathrm{div}(V) = \frac{1}{\sqrt{|g|}}\,\partial_i\!\big(\sqrt{|g|}\,V_i\big)$, or identically, for the Levi-Civita connection $\nabla$, $\mathrm{div}(V) = \trace(\nabla V)$.
The Laplace--Beltrami operator is defined by 
\begin{equation}\label{eq:lap-def}
\Delta^{\calM} f = \mathrm{div}(\nabla f) = \frac{1}{\sqrt{|g|}}\,\partial_i\!\Big(\sqrt{|g|}\,g^{-1}_{ij}\partial_j f\Big).
\end{equation}
The (Riemannian) Brownian motion $\bB_t^{\calM}$ is the diffusion on $\calM$ whose infinitesimal generator
is $\tfrac12\Delta^{\calM}$. More detailed background on Riemannian geometry can be found in, e.g., \citet{lee2018introduction, gallot1990riemannian}, and for stochastic processes on Riemannian manifolds, see, e.g., \citet{hsu2002stochastic, elworthy1998stochastic, de2022riemannian}.

\paragraph{Riemannian diffusion models.}
Let $p_0$ be a distribution on $\calM$.
We consider the drift-free Riemannian diffusion
\begin{equation}\label{eq:brownian-def}
dX_t \;=\; \sqrt{2} d\bB_t^{\calM}, \qquad X_0\sim p_0.
\end{equation}
A time-dependent scalar noise schedule $dX_t=\sigma_t\,d\bB_t^{\calM}$ can be introduced via a time reparameterization, so we present this formulation for simplicity.
Let $p_t$ be the marginal density of $(X_t)$ with respect to the Riemannian volume measure of $\calM$, and $p_{t|0}(x_t | x_0)$ be the transition density (or the heat kernel, since the SDE is simple Brownian).
The heat kernel $p_{t|0}$ satisfies the Fokker-Planck equation (or the heat equation)
\begin{equation}\label{eq:fp-eqn-background}
\begin{aligned}
\partial_t p_{t|0}(x\mid x_0) \;&=\; \,\Delta_x^{\calM} p_{t|0}(x\mid x_0),\\
\lim_{t\rightarrow 0} p_{t|0}(\cdot\mid x_0) \;&=\; \delta_{x_0}(\cdot).
\end{aligned}
\end{equation}
Under mild regularity assumptions, the reverse-time process $(X_{T-t})_{t\in [0,T]}$ satisfies the reverse-time SDE \citep{de2022riemannian}
\begin{equation}\label{eq:reverse-sde}
\begin{aligned}
dX_{T-t} \;&=\;  2s_{T-t}(X_{T-t})\,dt +  \sqrt{2}d\widetilde{\bB}_t^{\calM},\\
X_T \;&\sim\; p_T,
\end{aligned}
\end{equation}
where $\widetilde{\bB}_t^{\calM}$ is a (reverse-time) Brownian motion on $\calM$ and
\begin{equation}\label{eq:score-fn}
s_t(x) \;:=\; \nabla_x \log p_t(x)
\end{equation}
is the \emph{score function}. 

Given a neural network $s_\theta: [0, T] \times \calM \to T\calM$ with $s_\theta(t,x) \in T_x\calM$, one can approximate the score function $s$ by $s_\theta$ by learning $\theta$ with \emph{denoising score matching (DSM)} objective \citep{vincent2011connection, de2022riemannian},
\begin{equation}\label{eq:dsm-loss}
\begin{aligned}
&\mathcal{L}_{\text{DSM}}(\theta) :=\\
&
\mathbb{E}\Big[
\lambda(t)\,
\big\|\, s_\theta(t,X_t)
-\nabla_{X_t}\log p_{t|0}(X_t\mid X_0)\,\big\|_{g_{X_t}}^2
\Big].
\end{aligned}
\end{equation}

When $\calM$ is Euclidean, this formulation is identical to the standard Euclidean diffusion models \citep{ho2020denoising, song2020score}. In this case, the heat kernel $p_{t|0}$ becomes the isotropic Gaussian, so both sampling the forward diffusion $p_{t|0} (\cdot,x_0)$ and computing the conditional score function $\nabla_{x_t}\log p_{t|0}(x_t\mid x_0)$ are tractable. 
However, when working with a general Riemannian manifold $\calM$, the heat kernel is not known in closed form in general. Previous works \citep{de2022riemannian, lou2023scaling} introduce a few tricks to overcome this challenge.

\paragraph{Asymptotic methods.} When $t$ is sufficiently small, $p_{t|0}$ can be approximated by a Gaussian-like form:
\begin{equation}\label{eq:approx-warped}
\begin{aligned}    
    p_{t|0}(x_t|x_0) &\approx G_t(x_t, x_0)  \\ 
    &:= \frac{1}{(4 \pi t)^{n/2}} \exp \bigg(-\frac{d(x_t,x_0)^2}{4t}\bigg)
\end{aligned}
\end{equation}
where $n=\dim \calM$ and $d(\cdot, \cdot)$ is the geodesic distance. This justifies sampling from $p_{t|0}(\cdot|x_0)$ by sampling a Gaussian in a tangent space $T_{x_0}\calM$ and push-forwarding by the exponential map $\exp_{x_0}:T_{x_0}\calM \rightarrow \calM$.
This distribution is sometimes called the \emph{warped Gaussian}.
When $t$ is large, sampling from $p_{t|0}(\cdot|x_0)$ can be approximated by the \emph{geodesic random walk} algorithm \citep{de2022riemannian}, which is a variant of the random walk algorithm where the Gaussian is replaced by the warped Gaussian.
The score function $\nabla \log p_{t|0}$ also admits the asymptotic form:
\begin{equation}\label{eq:approx-varadhan}
\begin{aligned}
    \lefteqn{\nabla_{x_t} \log p_{t|0} (x_t|x_0)}\\ 
    &\approx \nabla_{x_t} \log G_t(x_t, x_0) 
    = \frac{1}{2t}\log_{x_t}(x_0),
    \end{aligned}
\end{equation}
called the Varadhan's asymptotics \citep{bismut1984large}. However, this approximation does not incorporate the volume element distorted by the curvature of the manifold. To account for this, \citet{lou2023scaling, camporesi1990harmonic} introduced the Schwinger-DeWitt approximation:
\begin{equation}\label{eq:approx-sd}
\begin{aligned}    
   \lefteqn{ p_{t|0}(x_t|x_0) \approx G_t^{\text{SD}}(x_t, x_0) } \\ 
    &\,\,\,:= \frac{D_{x_0}(x_t)^{-1/2}}{(4 \pi t)^{n/2}} \exp \bigg(-\frac{d(x_t,x_0)^2}{4t} + \frac{tR}{6}\bigg)
\end{aligned}
\end{equation}
where $D_{x_0}(x_t) = |\det (d( \exp_{x_0})_{\log_{x_0}(x_t)})|$ measures the volume distorted by the exponential map, and $R$ is the scalar curvature, assuming $\calM$ has a constant scalar curvature. However, this also only works with sufficiently small time $t$.

\paragraph{Spectral methods.}

When the manifold $\calM$ is compact, the Laplace-Beltrami operator $\Delta^{\calM}$ is a self-adjoint operator on the Hilbert space $L^2(\calM)$ (with respect to the Riemannian volume measure).
Hence it admits a spectral decomposition $\{(\lambda_i,\phi_i)\}_{i=0}^\infty$ satisfying $\Delta^{\calM}\phi_i \;=\; -\lambda_i \phi_i$, with eigenvalues $0=\lambda_0<\lambda_1\le\cdots$.
The heat kernel then has the spectral expansion
\begin{equation}\label{eq:approx-spectral}
p_{t|0}(x\mid x_0)
\;=\;
\sum_{i=0}^\infty e^{-\lambda_i t}\,\phi_i(x)\,\phi_i(x_0),
\end{equation}
which can be approximated by truncating the series.
In practice, however, this only works with \emph{compact} manifolds and requires access to the eigenfunctions of $\Delta^{\calM}$, which are rarely available beyond a few highly symmetric examples.
Recent work \citep{lou2023scaling} stabilizes and accelerates such computations by reformulating the expansion using representation theory on Lie groups, enabling efficient evaluation on certain groups and homogeneous/quotient spaces.
Nevertheless, these methods rely on strong algebraic structure and are only applicable to (quotients of) compact Lie groups or Euclidean groups.

\paragraph{Mapping into Euclidean spaces.}

When the exact computation on manifolds is challenging, using their Euclidean counterparts can be a useful trick. \citet{de2022riemannian} used stereographic projection to map data on 2D sphere into $\sR^2$. \citet{li2024spd} mimicked the logic of Euclidean diffusion on the space of positive definite matrices, using operations defined on its tangent space. Similar tricks are common for Lie groups: one can perform diffusion updates in the associated Lie algebra (a vector space) via the $\log$ and $\exp$ maps, e.g., for $SO(3)$ or $SE(3)$ \citep{leach2022denoising, jiang2023se}.

\section{Methods}\label{sec:method}

Given an explicit description of a Riemannian manifold $(\calM,g)$, we approximate the heat kernel $p_{t|0}(\cdot| x_0)$ by numerically solving the heat equation $\partial_t p=\Delta^{\calM}p$ with the initial condition $p_{0|0} = \delta(\cdot)$. For this work, we first locate the manifold $\calM$ into an easily-computable coordinate system, in which we derive the heat equation. Then we choose a properly approximated initial condition, since the original initial condition $p_{0|0} = \delta(\cdot)$ cannot be imposed numerically. Finally, we train a PINN that approximates the heat kernel. The PINN is used for both (i) forward sampling of Brownian motion and (ii) computing the conditional score function for the DSM loss.


\subsection{Choosing a coordinate system}\label{subsec:coord}

When working with manifolds mathematically, it is common to choose multiple \emph{local coordinate systems} and patch them to cover the entire manifold. However, when working with manifolds numerically, using multiple coordinate systems would be computationally inconvenient. Instead, we choose to work with a \emph{global coordinate system}, by embedding the manifold $\calM$ into a larger space $\tilde{\calM}$ (called an \emph{ambient manifold}; typically $\sR^N$, possibly with a non-Euclidean metric $\tilde{g}$) as a submanifold or quotient manifold.

For $\tilde{\calM} = \sR^N$, a submanifold $\calM \subseteq \sR^N$ can be practically defined by
\begin{equation}\label{eq:submanifold-def}
\begin{aligned}
\calM
&=\Bigl\{\,x\in\sR^N \ \Big|\ f_i(x)=0 \ \text{for } i=1,\ldots,k,\\
&\qquad\qquad\ \ f_j(x)>0 \ \text{for } j=k+1,\ldots,k+l\,\Bigr\}.
\end{aligned}
\end{equation}
for functions $f_1,\cdots,f_{k+l} : \sR^N \rightarrow \sR$. The manifold inherits the metric  $\tilde{g}$ of $\sR^N$. For $x \in \calM$, the projection map $P(x):T_x \sR^N \rightarrow T_x \calM \subseteq T_x \sR^N$ is given by
\begin{equation}\label{eq:projection-def}
P(x)=I-\tilde g_x^{-1}\,J(x)^{T}\,\bigl(J(x)\,\tilde g_x^{-1}\,J(x)^{T}\bigr)^{-1}\,J(x),
\end{equation}
where $\tilde{g}_x$ is the matrix representation of the metric at $x$, and $J(x)=\bigl(J_{ij}(x)\bigr):=\bigl(\del_j f_i(x)\bigr)\in\sR^{k\times N}$ is the Jacobian. The ambient metric $\tilde g$ and the projection matrix $P(x)$ are sufficient to express various differential-geometric operations on $\calM$ in coordinates.

A quotient manifold $\calM=\tilde{\calM}/G$ is obtained by identifying points related by an isometry group action of $G$. For simplicity, we restrict to a finite or discrete group $G$, so that $\calM$ inherits a smooth structure from $\tilde{\calM}$; numerically, quotienting can be implemented by treating all points in an orbit as the same state, e.g., via a canonical representative. We do not explicitly address freeness or properness of the group action; although non-free orbits may introduce lower-dimensional singularities in the quotient, these are negligible for the continuous distributions considered in our deep-learning applications. 

\subsection{Fokker--Planck equation in coordinates}\label{subsec:fp-coords}

Using the coordinates on the ambient manifold $\tilde{\calM}$ with the projection matrix $P(x)$, we can express all the required differential operators in coordinates. Let $\tilde\nabla$ denote the Levi--Civita connection of $(\tilde{\calM},\tilde g)$. Although we present the full expressions for generality, in practice, some terms (e.g., the Levi-Civita connection $\tilde\nabla$) simplify by choosing a simple ambient manifold $\tilde{\calM}$; for instance, when $\tilde{\calM}$ is a Euclidean space, $\tilde\nabla$ reduces to partial derivatives. The Fokker-Planck equation (or the heat equation) on $\calM$ can be explicitly written in coordinates, with expressions depending only on the ambient metric $\tilde{g}$ and the projection matrix $P$:
\begin{equation}\label{eq:fp-coords}
\begin{split}
&\del_t p_{t|0} (x | x_0) = \Delta^{\calM}p_{t|0}(x|x_0) \\
&\quad = P_{ij}(x)\,\tilde\nabla_k\!\Bigl(P_{j\ell}(x)\,\tilde g^{-1}_{\ell m}(x)\,\del_m \bar p_{t|0}(x|x_0)\Bigr)\,P_{ki}(x).
\end{split}
\end{equation}
where $\bar p_{t|0}$ denotes any smooth extension of $p_{t|0}$ on $\tilde{\calM}$. The detailed derivation is in \cref{subsec:fp-coords-app}.

\paragraph{Example: a 2D unit sphere.}
Let $S^2 = \{x \in \sR^3 ~|~ \lVert x\rVert = 1\}$ be the unit sphere. Although it's standard to use a local coordinate system such as the spherical coordinates, here we directly embed the 2D sphere in the Euclidean manifold $\sR^3$.
For $x \in S^2$, the projection matrix $P(x):T_x\sR^3 \rightarrow T_xS^2 \subseteq \sR^3$ is given as
\begin{equation}\label{eq:s2-proj}
    P(x) = I - xx^T; \quad P_{ij} = \delta_{ij}-x_ix_j
\end{equation}
where $\delta_{ij}$ is the Kronecker delta.
In the ambient manifold $\sR^3$, the metric $\tilde{g}$ is the identity matrix, and the Levi-Civita connection $\tilde{\nabla}_i$ simplifies to the partials $\del_i$.
Now we can work out the Laplace-Beltrami operator:
\begin{equation}\label{eq:s2-lap}
\begin{split}
    \Delta^{S^2}f &= P_{ij}\del_i\del_jf + P_{ki}(\del_kP_{ij})\del_jf \\
    &= (\delta_{ij} - x_ix_j) \del_i \del_j f - 2x_i \del_i f.
\end{split}
\end{equation}

\subsection{Initial condition for the Fokker--Planck equation}\label{subsec:initial}

Since it is not possible to impose the initial condition $p_{0|0}(x \mid x_0)=\delta_{x_0}(x)$ directly, we instead initialize the PDE using the small-time asymptotics of the heat kernel. Let $B_{x_0}(r_{\max})=\{x\in\calM \mid d(x,x_0)\le r_{\max}\}$ be the geodesic ball of radius $r_{\max}$ centered at $x_0$. On $x \in B_{x_0}(r_{\max})$, the heat kernel admits the zeroth-order expansion
\begin{equation}\label{eq:approx-0th}
\begin{split}
p_t(x\mid x_0) &= G_t^{(0)}(x,x_0) + O(t),\\
G_t^{(0)}(x,x_0) &:= D_{x_0}(x)^{-1/2}\,G_t(x,x_0)\\
&= \frac{D_{x_0}(x)^{-1/2}}{(4\pi t)^{n/2}}
\exp\!\bigg(-\frac{d(x,x_0)^2}{4t}\bigg).
\end{split}
\end{equation}
A principled way to obtain tighter approximations is the parametrix method (also known as the Minakshisundaram--Pleijel recursion) \citep{rosenberg1997laplacian}, which yields an order-$n$ expansion of the form
\begin{equation}\label{eq:approx-series}
\begin{split}
    & p_t(x|x_0) = G_t^{(n)}(x, x_0) + O(t^n), \\
    &G_t^{(n)}(x, x_0) 
    := G_t(x,x_0)\Big(u_0(x,x_0) \\
    & \quad +u_1(x, x_0)t + \cdots + u_{n-1}(x,x_0)t^{n-1}\Big).
\end{split}
\end{equation}
Here $u_0(x,x_0)=D_{x_0}(x)^{-1/2}$, and $u_1,\ldots,u_{n-1}$ can be computed recursively; we defer the details to \cref{subsec:parametrix}. The Schwinger-DeWitt approximation \cref{eq:approx-sd} is also a simplified version of the first-order parametrix expansion.

\paragraph{Stability analysis of PDE.}
In general, similarity in initial condition for PDEs does not guarantee similarity of the corresponding solutions at later times. However, due to various stability properties of the heat equation, solving the heat equation from an approximate initial condition at $t = t_0$ yields a solution that remains reliable for all $t \geq t_0$.

\begin{theorem}\label{thm:heat-stability}
    Let $\tilde{p}_t(x):[t_0,\infty) \times \calM \rightarrow \sR$ be the (unique) solution of the evolution equation $\del_t \tilde{p}_t (x) = \Delta^{\calM}\tilde{p}_t(x)$ with the initial condition $\tilde{p}_{t_0}(x) = p_{t_0|0}(x|x_0) + \varepsilon_{t_0}(x)$ with small initialization error 
    \begin{equation}
    |\varepsilon_{t_0}(x)| \le \varepsilon\, p_{t_0|0}(x|x_0) \quad \forall x \in \calM
    \end{equation}
    for an error level $\varepsilon \in (0,1)$.
    Then, on the domain  $[t_0, t_{\max}] \times B_{x_0}(r_{\max})$, the solution $\tilde{p}_{t|0}(x)$ is the approximation of $p_{t|0}(x|x_0)$ with the same order of accuracy as the initial condition:
    \begin{equation}
    \begin{split}
    \tilde p_t(x) &= p_{t|0}(x|x_0) + \varepsilon_t(x),\\
    |\varepsilon_t(x)| &\le \varepsilon\, p_{t|0}(x|x_0).
    \end{split}
    \end{equation}
    Moreover, under mild regularity assumption on $\tilde{p}_{t_0}(x)$ and its gradients, this solution can also approximate the log-density and the score:
    \begin{equation}
    \begin{split}
        \log \tilde{p}_{t}(x) &= \log p_{t|0}(x|x_0) + O(\varepsilon) \\
        \nabla_x\log \tilde{p}_{t}(x) &= \nabla_x\log p_{t|0}(x|x_0) + O_\delta(\varepsilon).
    \end{split}
    \end{equation}
    where the last statement holds $\forall ~ \delta>0$, on $t \in [t_0 + \delta, t_{\max}]$.
\end{theorem}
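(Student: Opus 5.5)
The plan rests on three facts: the heat equation is linear, the heat semigroup preserves positivity, and the exact heat kernel satisfies the semigroup property. Together they give the first claim directly. The log and score claims then follow from standard regularity estimates.

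\textbf{Step 1: propagate the pointwise bound.} Write $\varepsilon_t := \tilde p_t - p_{t|0}(\cdot|x_0)$. By linearity and uniqueness, $\varepsilon_t$ solves the heat equation on $[t_0,\infty)\times\calM$ with initial datum $\varepsilon_{t_0}$. Consider the two functions
\[
\varepsilon\, p_{t|0}(\cdot|x_0) \pm \varepsilon_t .
\]
Each solves the heat equation. At $t=t_0$ each is nonnegative, by the hypothesis $|\varepsilon_{t_0}|\le \varepsilon\, p_{t_0|0}$. Note that $p_{t|0}(\cdot|x_0)$ is itself the solution started from $p_{t_0|0}$ at time $t_0$, by the Chapman--Kolmogorov (semigroup) property of the heat kernel. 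The parabolic maximum principle then gives $\varepsilon p_{t|0}\pm\varepsilon_t\ge 0$ for all $t\ge t_0$, which is $|\varepsilon_t|\le\varepsilon\,p_{t|0}$. Equivalently, this is positivity of $e^{(t-t_0)\Delta}$, i.e.\ positivity of the heat kernel. On a complete manifold the maximum principle must be justified within a uniqueness class: I would invoke stochastic completeness and the representation $\tilde p_t=\int p_{t-t_0}(x,y)\tilde p_{t_0}(y)\,dy$. This representation is also what the phrase ``the (unique) solution'' implicitly refers to. The bound in fact holds on all of $\calM$, so restricting to $[t_0,t_{\max}]\times B_{x_0}(r_{\max})$ is harmless.

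\textbf{Step 2: log-density.} From Step 1, $\tilde p_t = p_{t|0}(1+\eta_t)$ with $|\eta_t|\le\varepsilon<1$. Since $p_{t|0}>0$, we get $\log\tilde p_t-\log p_{t|0}=\log(1+\eta_t)$. This is bounded by $|\log(1-\varepsilon)|=O(\varepsilon)$, uniformly on the whole domain.

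\textbf{Step 3: the score, which is the main obstacle.} A pointwise relative bound on a function does not control its gradient, so some smoothing is needed. This is why the statement requires waiting a time $\delta>0$. Write
\[
\nabla\log\tilde p_t-\nabla\log p_{t|0}=\frac{\nabla\varepsilon_t}{\tilde p_t}-\frac{\varepsilon_t\,\nabla p_{t|0}}{p_{t|0}\,\tilde p_t}.
\]
The denominators are bounded below by $(1-\varepsilon)p_{t|0}$. On the compact set $[t_0+\delta,t_{\max}]\times B_{x_0}(r_{\max})$, the function $p_{t|0}$ is bounded below by a positive constant and $|\nabla p_{t|0}|$ is bounded above. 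The second term is therefore $O(\varepsilon)$.

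For the first term I need $|\nabla\varepsilon_t(x)|\le C_\delta\,\varepsilon$. I would obtain this from an interior parabolic gradient estimate on the cylinder $[t-\delta,t]\times B_x(\rho)$. That estimate bounds $|\nabla\varepsilon_t(x)|$ by $C\,(\delta^{-1/2}+\rho^{-1})\sup|\varepsilon|$ over the cylinder. By Step 1, $\sup|\varepsilon|\le\varepsilon\sup p$, and $\sup p$ is bounded on the slightly enlarged compact set for times in $[t_0,t_{\max}]$.

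Alternatively, I would write $\varepsilon_t=\int p_{\delta}(x,y)\,\varepsilon_{t-\delta}(y)\,dy$ and differentiate under the integral. This needs a Gaussian gradient bound for $\nabla_x p_\delta$ (Li--Yau or Hamilton type, valid under bounded geometry). It also needs integrability of $p_{t-\delta|0}$ against that Gaussian, which is where the ``mild regularity assumption'' enters. The constants depend on $\delta$, $t_{\max}$, $r_{\max}$ and the local geometry, which yields the stated $O_\delta(\varepsilon)$.
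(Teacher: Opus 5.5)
Your proposal is correct, and Steps 1--2 coincide with the paper's proof. The paper defines $\tilde p$ through the kernel representation and uses only positivity of the minimal heat kernel together with Chapman--Kolmogorov. Your maximum-principle framing therefore reduces to the same one-line estimate, and the appeal to stochastic completeness is not needed. Your decomposition of the score difference is also exactly the paper's Step 5.

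The genuine difference is in how $\nabla\varepsilon_t$ is controlled. The paper differentiates $\tilde p - p_{t|0} = \int p(t-t_0,x,y)\,(q_{t_0}-p_{t_0|0})(y)\,dy$ under the integral sign. To justify this it \emph{assumes} $\bar A_\delta<\infty$, where $\bar A_\delta$ is the supremum over $D_\delta$ of $p_{t|0}(x)^{-1}\int\|\nabla_x p(t-t_0,x,y)\|\,p_{t_0|0}(y)\,dy$; this is its ``mild regularity assumption''. It yields the relative bound $\|\nabla(\tilde p - p_{t|0})\|\le \varepsilon\,\bar A_\delta\,p_{t|0}$ and the explicit constant $\frac{\varepsilon}{1-\varepsilon}(\bar A_\delta + B_\delta)$. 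Your second alternative is precisely this route.

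Your primary route is genuinely different. You apply an interior parabolic gradient estimate to the signed solution $\varepsilon_s$ on a small cylinder. It is fed by the global pointwise bound of Step 1 and by compactness of $[t_0,t_{\max}]\times B_{x_0}(r_{\max}+\rho)$ (Hopf--Rinow). This argument is purely local and needs no integrability of heat-kernel gradients against $p_{t_0|0}$. So it gives the $O_\delta(\varepsilon)$ score bound on any complete manifold, without the hypothesis $\bar A_\delta<\infty$ and without any regularity of the initializer.

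The price is explicitness. Your constant depends on local geometry: the Euclidean-looking factor $C(\delta^{-1/2}+\rho^{-1})$ should also involve, e.g., a lower Ricci bound on the enlarged ball, though any finite constant suffices. It also depends on $\sup p_{s|0}$ over the enlarged set and on $\inf p_{t|0}$ over $D_\delta$. The paper's bound, by contrast, is stated purely in heat-kernel quantities and avoids PDE regularity theory.

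One small technical adjustment: use the cylinder $[t-\delta/2,t]\times B_x(\rho)$, so that its bottom lies strictly after $t_0$. There $\varepsilon_s$ is smooth, whereas at $s=t_0$ the error is only assumed measurable.
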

\cref{thm:stability_heat_kernel} is a rigorous version of this theorem. Moreover, in \cref{subsec:nonzero_pinn} we extend this result to learned PINNs with small log-PDE residual, where the log-density error is bounded by the boundary error plus the accumulated residual.

\subsection{Training PINN}\label{subsec:pinn}

A PINN is a deep-learning approach for approximating solutions to PDEs by embedding the governing equation directly into the training objective \citep{raissi2019physics}.
While many PINN variants also incorporate observations at interior collocation (quadrature) points, we describe here a basic formulation that uses only the initial/boundary condition and the PDE residual.
Let the space–time domain be $[t_0, t_{\max}] \times U, $ where $U \subset \sR^N$ is closed, and consider a well-posed evolution equation written in residual form
\begin{equation}\label{eq:pde-def}
    \calR[f](t, x) := \calR(t, x, \del_t f, \del_x f, \del_x^2 f,\cdots) = 0
\end{equation}
with initial condition $f(t_0,x) = f_0(x)$.
In the PINN framework, we parameterize the solution by a neural network $f_{\theta}:[t_0, t_{\max}] \times U \rightarrow \sR$ and train $\theta$ minimizing a combination of the initial/boundary loss:
\begin{equation}\label{eq:pinn-loss}
\begin{split}
    \calL_B(f_\theta) &:= \E_x [\lVert f_\theta(x, t_0) - f_0(x)] \rVert^2 ] \\
    \calL_{I}(f_\theta) &:= \E_{t,x} [\lVert \calR[f_\theta](x, t)\rVert^2 ]    
\end{split}
\end{equation}
where $x\in U$ and $(t,x)\in [t_0,t_{\max}] \times U$ are sampled on some adequate distributions on their domain.

This PINN formulation is well suited to approximating the heat kernel. For a given manifold $\calM$, we derive the heat equation written in coordinates and train the PINN with the approximate initial condition $\tilde{p}_{t_0|0}$. For numerical stability, we let $p_{t|0} = e^{\phi_{t|0}}$ and rewrite the equation in terms of $\phi_{t|0}$, and train a PINN $\phi^{(\theta)}_{t|0}$ approximating it. The trained PINN is used jointly with the short-time approximation, to approximate the heat kernel:
\begin{equation}\label{eq:logp-approx}
\log p^{(\text{approx})}_{t|0}(x| x_0)=
\begin{cases}
\begin{array}{@{}l@{}}
\log \tilde{p}_{t|0}(x| x_0), \quad t<t_0,\\
\text{\small(short-time asymptotics)}
\end{array}
\\[6pt]
\begin{array}{@{}l@{}}
\phi^{(\theta)}_{t|0}(x| x_0), \quad t\ge t_0.\\
\text{\small(PINN approximation)}
\end{array}
\end{cases}
\end{equation}
For stable training of PINN, we adopt several techniques of \citet{wang2023expert}, including modified MLP structures with embeddings, adaptive weight scaling of two losses, and time-scheduled learning method.

\paragraph{Representational power of PINN.}
Quantifying whether a PINN can accurately represent the heat kernel on a given (potentially large and geometrically complex) manifold $\calM$ is challenging, since the approximation error depends strongly on modeling choices such as the network architecture.
Nevertheless, the heat equation has a smoothing property: for any $t>t_0$, the evolved log-density $\log p_{t|0}$ is guaranteed to be more regular than the initial log-density $\log p_{t_0|0}$, e.g., in terms of Sobolev norms \citep{grigoryan2009heat}.
This suggests that, once a reasonable approximation is available at $t=t_0$, learning the forward solution for $t\ge t_0$ may be easier than fitting the initial condition itself.
In high dimensions, PINNs can be difficult to train and may require more computational and memory resources \citep{cho2023separable, hu2024tackling}, often necessitating additional inductive biases (e.g., incorporating known symmetries and using expressive input embeddings) to improve optimization behavior.

\subsection{Riemannian diffusion model with PINN}\label{subsec:riem-diff-pinn}

All ingredients for building Riemannian diffusion models are now set. We train a Riemannian diffusion model as described in \cref{sec:background}. Having knowledge of the density $\log p^{(\text{approx})}_{t|0}(x| x_0)$, the forward sampling $x_t \sim p^{(\text{approx})}_{t|0}(\cdot| x_0)$ can be done by a Markov chain Monte Carlo (MCMC) method. Since the warped Gaussian discussed in \cref{sec:background} is an effective approximation of $p_{t|0}$, we use it for the initial distribution of the MCMC, then run a geodesic random walk for a fixed step size $\varepsilon>0$ for proposal. After $x_t$ is drawn, the conditional score $\nabla \log p^{(\text{approx})}_{t|0}$ is computed by differentiating the PINN, and train the diffusion model under the DSM loss. 

The architecture of a diffusion model on $\calM$ depends on the geometry of $\calM$. For example, if $\calM$ is (a quotient of) a submanifold of an ambient space $\tilde{\calM}=\sR^N$, then an extrinsic network $s_{\theta}:[0,T]\times\sR^N \rightarrow \sR^N$ can be used, possibly with appropriate symmetry constraints induced by the quotient. Alternatively, one may use an intrinsic architecture tailored to $\calM$, e.g., $s_{\theta}:[0,T]\times\calM \rightarrow \calM$. In this case, since the denoiser corresponds to a tangent vector $\nabla_{x} \log p_{t|0}(x\mid x_0)$, we adopt a standard trick from Euclidean diffusion models that reparameterizes prediction in terms of the denoised data \citep{ho2020denoising}. Concretely, instead of directly learning $\nabla_{x} \log p_{t|0}(x\mid x_0)$, we train the "denoised input"
$\exp_x\!\big(2t\nabla_x \log p_{t|0}(x\mid x_0)\big)\in \calM$.

For the reverse sampling, we also require the perturbed (or prior) distribution $p_T \approx p(\cdot | x_0)$. When $\calM$ is compact, we use the uniform distribution $p_T \approx \unifdist(\calM)$ assuming $T$ is sufficiently large. When $\calM$ is not compact, we set a rough mean of the data distribution $x_{\text{mean}}$ and take $p_T \approx p_T(\cdot,x_{\text{mean}})$. This setting is common in recent state-of-the-art Euclidean diffusion models \citep{karras2022elucidating}.

\section{Worked examples}\label{sec:worked-examples}

\subsection{\mathwrap{2D sphere $S^2$}}\label{subsec:example-s2}

As derived in \cref{subsec:fp-coords}, the heat equation on $S^2 \subset \sR^3$ is given as $\del_t p = (\delta_{ij} - x_ix_j) \del_i \del_j p - 2x_i \del_i p$, and for $\phi = \log p$,
\begin{equation}\label{eq:logheat-s2}
\begin{split}
    \del_t \phi &= [(\delta_{ij}-x_ix_j)(\del_i\del_j\phi + \del_i\phi\del_j\phi)-2x_i\del_i\phi] \\
    &= \trace(\nabla^2\phi) + \lVert\nabla \phi\rVert^2 -x^T(\nabla^2\phi) x \\
    &\quad -(x^T\nabla\phi)^2 - 2x^T\nabla\phi
\end{split}
\end{equation}
The initial condition can be computed using the parametrix method as described in \cref{subsec:initial} up to third order. The recursion integrals are evaluated via series expansion using a computer algebra system (CAS). Detailed procedure is in \cref{subsec:parametrix_s2}.

\subsection{\mathwrap{Special orthogonal group $SO(3)$}}\label{subsec:example-so3}

The special orthogonal group $SO(3)$ is isometric to $S^3 / \{\pm I\}$, where $S^3 \subset \sR^4$ is the 3D unit sphere and $ \cdot / \{\pm I\}$ denotes quotienting by identifying the opposite points on sphere, i.e. $\pm x \in S^3$ are regarded as an identical point. This isometry can be explicitly written using quaternions. Once the mapping is constructed, the heat equation is similar to the case of the 2D sphere:
\begin{equation}\label{eq:logheat-so3}
\begin{split}
    \del_t \phi &= [(\delta_{ij}-x_ix_j)(\del_i\del_j\phi + \del_i\phi\del_j\phi)-3x_i\del_i\phi] \\
    &= \trace(\nabla^2\phi) + \lVert\nabla \phi\rVert^2 -x^T(\nabla^2\phi) x \\
    &\quad -(x^T\nabla\phi)^2 - 3x^T\nabla\phi
\end{split}
\end{equation}
and the initial condition is again computed using the parametrix method, similar to \cref{subsec:example-s2}.
\subsection{\mathwrap{Space of positive-definite matrices $\SPD(n)$}}\label{subsec:example-spd}

Generative modeling on the space of positive definite matrices $\SPD(n) = \{X \in \sR^{n \times n} ~|~ X \text{ is positive definite}\}$ has drawn interest in recent research, e.g. for traffic analysis or fMRI data modeling \citet{li2024spd, collas2025riemannian}. The hyperbolic geometry of $\SPD(n)$ is well suited to the affine-invariant metric $g_X(U,V) = \trace (X^{-1}UX^{-1}V)$, written in infinitesimal form. The derivation of the heat equation is in \cref{subsec:heat-spd-app}. For any initial point $X_0\in \SPD(n)$, since the map $X \mapsto X_0^{-1/2}XX_0^{-1/2}$ is an isometry under the affine-invariant metric that maps $X_0$ to $I \in \SPD(n)$, it suffices to consider the heat equation starting at $I$. The heat equation starting at $I$ only depends on the eigenvalues of $X \in \SPD(n)$, so we parametrize the matrices by their log-eigenvalues $(r_1,\cdots,r_n)$ where $X = Q \diag (e^{r_1},\cdots,e^{r_n})Q^T$ for some $Q \in O(n)$, then the heat kernel is reformulated by an equation in $r_1,\cdots,r_n$:
\begin{equation}\label{eq:heat-spd}
    \del_tp_{t|0} = \sum_i \del_i^2p_{t|0} + \frac{1}{2}\sum_{i < j} \coth\Big(\frac{r_i-r_j}{2}\Big)(\del_i p_{t|0}-\del_j p_{t|0})
\end{equation}
where the partials are in the $r$-coordinates, i.e. $\del_i = \frac{\del}{\del r_i}$. For the initial condition, we use the 0-th order approximation in \cref{subsec:initial}:
\begin{equation}\label{eq:initial-spd}
    \tilde{p}_{t_0|0}(X | I) = \frac{D(r)^{-1/2}}{(4 \pi t_0)^{n(n+1)/4}} \exp \bigg(-\frac{\lVert r\rVert^2}{4t_0}\bigg)
\end{equation}
where $D(r) =\prod_{i < j} \frac{\sinh((r_i-r_j)/2)}{(r_i-r_j)/2}$.

\subsection{\mathwrap{Space of point clouds quotiented by permutation symmetry $\sR^{k \times n} / S_n$}}\label{subsec:example-perm}

A point cloud $X = [x^{(1)},\cdots,x^{(n)}] \in \sR^{k \times n}$ usually has the permutation symmetry $S_n$ by reordering the points, and it is common to incorporate permutation symmetry for e.g. architectural designs \citep{qi2017pointnet, zaheer2017deep, lee2019set} or probabilistic formulation \citep{garnelo2018conditional} in deep learning. Although existing works for generative modeling of point clouds (e.g., molecules) mostly use permutation symmetric neural networks for denoisers \citep{hoogeboom2022equivariant, le2023navigating}, their diffusion process is formulated in $\sR^{k \times n}$. To model a diffusion on the quotient manifold $\sR^{k \times n} / S_n$, one requires a heat kernel on this space, given as:
\begin{equation}\label{eq:heat-perm}
\begin{aligned}
p_{t|0}(x\mid x_0)
&= \sum_{\sigma\in S_n} \frac{1}{(4\pi t)^{kn/2}} \\
&\quad \cdot \exp\!\bigg(-\frac{1}{4t}\sum_{i=1}^n
\lVert x^{(i)}-x_0^{(\sigma(i))}\rVert^2\bigg).
\end{aligned}
\end{equation}
Intuitively, in this setting, all the points $x^{(1)},\cdots,x^{(n)}$ share the identity. Although the heat kernel is in closed-form, it is intractable to compute due to the $n!$ elements of $S_n$. We use a permutation-symmetric neural network as a PINN to approximate this kernel, with the equation and (approximate) initial condition same as the Euclidean one:
\begin{equation}\label{eq:initial-perm}
\begin{split}
    \del_t\tilde{p}_{t|0}  &= \sum_{i=1}^n\sum_{j=1}^k \frac{\del ^2}{{\del x_j^{(i)}}^2}\tilde{p}_{t|0}, \\
    \tilde{p}_{t_0|0}(x\mid x_0)&=  \frac{1}{(4\pi t_0)^{\tfrac{kn}{2}}}\exp\bigg(-\frac{\sum_{i=1}^n \lVert x^{(i)}-x^{(i)}_0\rVert^2}{4t_0}\bigg).
\end{split}
\end{equation}
\section{Experiments}\label{sec:experiments}

In this section, we demonstrate how our method applies across diverse settings and a range of manifolds. Our experiments are designed to highlight the generality of the framework, while minimizing task-specific engineering aimed solely at boosting performance. Our implementation is available at \url{https://github.com/kogyeonghoon/riem-diff-pinn.git}.

\subsection{Training PINN}\label{subsec:train_pinn}

We largely follow the PINN training protocol of \citet{wang2023expert}, adapting its architectural and optimization strategies to our manifold setting. Specifically, we employ a modified MLP architecture in which each hidden layer is fused with manifold-aware coordinate embeddings. This design enriches the coordinate-based neural representation and improves its ability to approximate nonlinear and geometrically complex surrogate solutions. Since the accuracy of later-time dynamics depends critically on the quality of the solution learned at earlier times, we adopt a curriculum strategy that progressively expands the training interval to $[t_0,t_{\max}]$. In addition, we use an adaptive loss-balancing scheme based on the exponential moving average of gradient norms, together with time-dependent residual weights that encourage the PINN to learn the solution in a temporally causal manner.

The input embeddings are chosen according to the geometry of each manifold. For $S^2$ and $SO(3)$, we use Fourier embeddings following \citet{wang2023expert}. For $\mathrm{SPD}(n)$, we use the symmetric and antisymmetric embeddings described in \cref{sec:exp-details}, motivated by the permutation-invariant structure of the radial heat kernel. For $\sR^{k \times n}/S_n$, we replace the modified MLP with a permutation-symmetric GNN architecture, reflecting the permutation symmetry of the quotient heat-kernel surrogate. The spatial and temporal training domains of the PINN, specified by the radius cutoff $r_{\max}$ and the time interval $[t_0,t_{\max}]$, are selected separately for each manifold and downstream dataset. We choose $r_{\max}$ to cover the region where the heat-kernel surrogate is queried, while $t_{\max}$ is set large enough so that the forward heat diffusion sufficiently corrupts the data by the terminal time.

To directly validate the learned heat-kernel surrogate, we evaluate it separately from the downstream diffusion-model performance. On $S^2$ and $SO(3)$, where accurate reference heat-kernel approximations are available via \citet{lou2023scaling}, we compare both the learned log-density $\log p_{t|0}^{(\theta)}(x|x_0)$ and the conditional score $\nabla_x\log p_{t|0}^{(\theta)}(x|x_0)$ against the reference quantities across several noise levels in \cref{tab:heat_kernel_eval}. For the remaining manifolds, such reference kernels are not tractable, so we instead assess whether the trained PINN satisfies the defining heat-kernel problem. Specifically, for all manifolds we report the normalized initial/boundary-condition error and the normalized PDE residual of the learned log-density in \cref{tab:heat_kernel_residual}. The PDE residual is normalized by the largest-magnitude term appearing in the equation; for example, for a PDE written in the form $A=B+C$, we report $|A-B-C|/\max{|A|,|B|,|C|}$. These diagnostics measure, respectively, how well the surrogate matches the prescribed short-time initial condition and how accurately it satisfies the manifold heat equation after training. Thus, the quality of the surrogate is evaluated both by direct comparison to existing heat-kernel approximations when available, and by geometry-agnostic PINN consistency checks across all considered manifolds. Finally, we report the wall-clock runtime of the surrogate in \cref{subsec:time}; since the surrogate is implemented in JAX and evaluated after JIT compilation, log-density and score evaluations incur only a small computational overhead.

\begin{table}[t]
    \centering
    \caption{Direct validation of the learned heat-kernel surrogate on $S^2$ and $SO(3)$ against the reference approximation of \citet{lou2023scaling}. We report absolute and relative errors for both the log-density and the conditional score. Large relative score errors at large $t$ are due to the small reference score norm $\|\nabla \log p\|$.}
    \label{tab:heat_kernel_eval}
    \scriptsize
    \setlength{\tabcolsep}{3pt}
    \renewcommand{\arraystretch}{1.05}
    \begin{tabular}{lccccc}
        \toprule
    {\tiny \textbf{$\calM$}} 
    & {\tiny $t$}
    & {\tiny $|\operatorname{err}(\log p)|$}
    & {\tiny $\frac{|\operatorname{err}(\log p)|}{|\log p|}$}
    & {\tiny $\|\operatorname{err}(\nabla \log p)\|$}
    & {\tiny $\frac{\|\operatorname{err}(\nabla \log p)\|}{\|\nabla \log p\|}$} \\
        \midrule
        $S^2$ & $0.30$ & $0.0054$ & $0.0009$ & $0.0281$ & $0.0247$ \\
              & $0.50$ & $0.0079$ & $0.0032$ & $0.0182$ & $0.1844$ \\
              & $1.00$ & $0.0016$ & $0.0007$ & $0.0027$ & $0.0894$ \\
              & $2.00$ & $0.0016$ & $0.0006$ & $0.0006$ & $0.1211$ \\
              & $4.00$ & $0.0016$ & $0.0006$ & $0.0002$ & $2.2570$\\
        \midrule
        $SO(3)$ & $0.30$ & $0.0006$ & $0.0002$ & $0.0024$ & $0.0277$ \\
                & $0.50$ & $0.0055$ & $0.0013$ & $0.0060$ & $0.2112$ \\
                & $1.00$ & $0.0029$ & $0.0007$ & $0.0004$ & $0.7983$ \\
                & $2.00$ & $0.0058$ & $0.0014$ & $0.0000$ & $4.3249$ \\
                & $4.00$ & $0.0232$ & $0.0054$ & $0.0077$ & $4640.7837$ \\
        \bottomrule
    \end{tabular}
\end{table}
\begin{table}[t]
    \centering
    \caption{PINN error of the learned heat-kernel surrogate. For each manifold, the first row lists the diagnostic: ``BC'' denotes the normalized initial/boundary-condition error, and each time value denotes the normalized PDE residual evaluated at that time. The second row reports the corresponding normalized error.}
    \label{tab:heat_kernel_residual}
    \tiny
    \setlength{\tabcolsep}{2pt}
    \renewcommand{\arraystretch}{1.05}
    \resizebox{\columnwidth}{!}{
    \begin{tabular}{@{}lcccccc@{}}
        \toprule
        \textbf{$\calM$} & \multicolumn{6}{c}{\textbf{Normalized PINN error}} \\
        \midrule
        $S^2$
            & BC & $t=0.30$ & $t=0.50$ & $t=1.00$ & $t=2.00$ & $t=4.00$ \\
            & $0.0002$ & $0.0037$ & $0.0285$ & $0.0185$ & $0.0181$ & $0.0124$ \\
        \midrule
        $SO(3)$
            & BC & $t=0.30$ & $t=0.50$ & $t=1.00$ & $t=2.00$ & $t=4.00$ \\
            & $0.0002$ & $0.0278$ & $0.0769$ & $0.0817$ & $0.0427$ & $0.1239$ \\
        \midrule
        $\mathrm{SPD}(10)$
            & BC & $t=0.10$ & $t=0.30$ & $t=0.50$ & $t=1.00$ & -- \\
            & $0.1490$ & $0.0048$ & $0.0035$ & $0.0034$ & $0.0060$ & -- \\
        \midrule
        $\sR^{k \times n}/S_n$
            & BC & $t=0.01$ & $t=0.03$ & $t=0.10$ & $t=0.20$ & $t=0.40$ \\
            & $0.0021$ & $0.0066$ & $0.1391$ & $0.1543$ & $0.1941$ & $0.1747$ \\
        \bottomrule
    \end{tabular}
    }
\end{table}

\subsection{Climate science datasets on $S^2$}\label{subsec:exp-s2}

Following the experimental setup of RSGM \citep{de2022riemannian}, we validate our method on the spherical density-estimation benchmarks comprising geolocated events on Earth’s surface—volcanic eruptions \citep{volcanoe_dataset}, earthquakes \citep{earthquake_dataset}, floods \citep{flood_dataset}, and wildfires \citep{fire_dataset}. Each event is represented as a point on the unit sphere $S^2$. We report test log-likelihoods computed with the same manifold-adapted likelihood estimator for score-based models used in RSGM, and we compare against their method. The results are summarized in \cref{tab:geo_so3_exp}. In RSGM, the computation of the heat kernel is approximated using a combination of the Varadhan approximation and spectral decomposition.

\begin{table}[t]
    \centering
    \caption{Test log-likelihoods on geolocation benchmarks on $S^2$ and synthetic data on $SO(3)$.}
    \label{tab:geo_so3_exp}
    \scriptsize
    \setlength{\tabcolsep}{2pt}
    \renewcommand{\arraystretch}{0.95}
    \resizebox{\columnwidth}{!}{%
    \begin{tabular}{@{}lccccc@{}}
    \toprule
        & Volcano & Earthquakes & Floods & Wildfires & Synthetic ($SO(3)$) \\
    \midrule
        RSGM & $3.51 \pm 0.12$ & $0.10 \pm 0.03$ & $-0.53 \pm 0.04$ & $1.03 \pm 0.01$ & $0.18 \pm 0.01$ \\
        Ours & $\mathbf{3.56 \pm 0.17}$ & $\mathbf{0.24 \pm 0.05}$ & $\mathbf{-0.47 \pm 0.01}$ & $\mathbf{1.08 \pm 0.01}$ & $\mathbf{0.21 \pm 0.003}$ \\
    \bottomrule
    \end{tabular}%
    }
\end{table}

\subsection{Synthetic data on $SO(3)$}\label{subsec:exp-so3}

We again follow the experimental setting of RSGM and evaluate our method on $SO(3)$ using synthetic data drawn from a mixture of wrapped Gaussians. In RSGM, the heat kernel is approximated using the Varadhan approximation. Results are reported in \cref{tab:geo_so3_exp}.

\subsection{Traffic analysis on $\SPD(10)$}\label{subsec:exp-taxi}

\citet{li2024spd} is one of the early works to formulate diffusion on the space of positive definite matrices. They use the NYC taxi dataset \citep{tucker2023variable}, where New York City is divided into 10 regions and each $\SPD(10)$ matrix encodes traffic flows between pairs of regions. The dataset includes 13 predictor features, such as weather conditions on the observed day. Following their experimental setting, we train a conditional diffusion model that takes predictors $y$ as input and generates an SPD matrix $X \in \SPD(10)$. We evaluate using the Frobenius distance between generated matrices and the ground-truth data. Results are in \cref{tab:taxi_exp}.


\begin{table}[h]
    \centering
    \scriptsize
    \caption{Mean and quartiles of Frobenius distances between true and generated data.}
    \label{tab:taxi_exp}
    \setlength{\tabcolsep}{3pt} 
    \renewcommand{\arraystretch}{1.0}
    \resizebox{\linewidth}{!}{%
    \begin{tabular}{lcccc}
        \toprule
        \textbf{Method}
        & Mean & 25\% & 50\% & 75\% \\
        \midrule
        \midrule
        SPD-DDPM & $6.15 \pm 0.12$& $1.75 \pm 0.05$& $3.40 \pm 0.05$& $7.49 \pm 0.20$ \\
        Ours     & $\mathbf{5.21 \pm 0.12}$& $\mathbf{1.70 \pm 0.01}$& $\mathbf{3.12 \pm 0.07}$& $\mathbf{6.31 \pm 0.36}$ \\
        \bottomrule
    \end{tabular}
    }
\end{table}

\subsection{Brain connectivity (EEG) analysis on $\SPD(n)$}\label{subsec:exp-eeg}

DiffeoCFM \citep{collas2025riemannian} introduces conditional flow matching using a diffeomorphic mapping $\SPD(n)\rightarrow \sR^{n(n+1)/2}$, enabling SPD matrices to be modeled in a Euclidean space. Following their experimental protocol, we evaluate our method on the EEG datasets BNCI2014-002 \citep{steyrl2016random} and BNCI2015-001 \citep{faller2012autocalibration}. Each SPD matrix is paired with a binary motor-imagery label, and we generate SPD samples conditionally on label. We evaluate the generated samples using distributional quality metrics ($\alpha$-precision, $\beta$-recall and F1) based on \citet{alaa2022faithful}, and downstream classification accuracy (AUC and F1), obtained by training a classifier on generated samples and evaluating it on the test set.

In addition to DiffeoCFM, we compare against the diffusion-style baseline LowTriDDPM \citep{collas2025riemannian}, which models diffusion over the lower-triangular entries of SPD matrices. As shown in \cref{tab:eeg_bnci2014} and \cref{tab:eeg_bnci2015}, DiffeoCFM performs best on downstream AUC/CAS-F1, while our method is competitive and achieves strong precision/Q-F1 on some settings. We hypothesize that the data-centric diffeomorphic parameterization (via a $\log$ map around a central reference point) captures global structure more effectively than operating directly on $\SPD(n)$ under the affine-invariant metric.
\begin{table}[t]
    \centering
    \caption{Quality and CAS metrics on BNCI 2014-002.}
    \label{tab:eeg_bnci2014}
    \scriptsize
    \setlength{\tabcolsep}{2pt}
    \renewcommand{\arraystretch}{1.05}
    \resizebox{\columnwidth}{!}{%
    \begin{tabular}{lccccc}
        \toprule
        \textbf{Method}
        & $\alpha$-P
        & $\beta$-R
        & Q-F1
        & AUC
        & CAS-F1 \\
        \midrule
        LowTriDDPM & $0.36\pm0.05$ & $\mathbf{0.76\pm0.03}$ & $0.47\pm0.04$ & $0.65\pm0.03$ & $0.66\pm0.06$ \\
        DiffeoCFM  & $0.62\pm0.06$ & $0.63\pm0.05$          & $0.61\pm0.02$ & $\mathbf{0.81\pm0.01}$ & $\mathbf{0.74\pm0.02}$ \\
        Ours       & $\mathbf{0.75\pm0.04}$ & $0.35\pm0.04$ & $\mathbf{0.69\pm0.02}$ & $0.75\pm0.02$ & $0.70\pm0.03$ \\
        \bottomrule
    \end{tabular}}
\end{table}
\begin{table}[t]
    \centering
    \caption{Quality and CAS metrics on BNCI 2015-001.}
    \label{tab:eeg_bnci2015}
    \scriptsize
    \setlength{\tabcolsep}{2pt}
    \renewcommand{\arraystretch}{1.05}
    \resizebox{\columnwidth}{!}{%
    \begin{tabular}{lccccc}
        \toprule
        \textbf{Method}
        & $\alpha$-P
        & $\beta$-R
        & Q-F1
        & AUC
        & CAS-F1 \\
        \midrule
        LowTriDDPM & $0.76\pm0.03$ & $\mathbf{0.93\pm0.01}$ & $0.84\pm0.04$ & $0.64\pm0.03$ & $0.56\pm0.10$ \\
        DiffeoCFM  & $\mathbf{0.93\pm0.01}$ & $0.86\pm0.01$ & $\mathbf{0.90\pm0.01}$ & $\mathbf{0.73\pm0.01}$ & $\mathbf{0.67\pm0.01}$ \\
        Ours       & $0.91\pm0.02$ & $0.81\pm0.02$ & $\mathbf{0.90\pm0.01}$ & $0.71\pm0.01$ & $0.56\pm0.01$ \\
        \bottomrule
    \end{tabular}}
\end{table}

\subsection{Molecule generation on $\sR^{k \times n} / S_n$}\label{subsec:exp-perm}

Finally, we study molecule generation on the permutation-quotiented space $\sR^{k \times n} / S_n$, using the E(3)-Equivariant Diffusion Model (EDM) \citep{hoogeboom2022equivariant} as a baseline. QM9 \citep{ramakrishnan2014quantum} contains molecules with up to 29 atoms, along with atomic positions and molecular properties. Following EDM, we concatenate atomic coordinates with one-hot atom types to form per-atom feature vectors, representing a molecule with $n$ atoms as a point cloud in $\sR^{k \times n}$. To enable a more direct comparison with our method, we train an EDM variant based on the variance-exploding (VE) SDE \citep{song2020score}. We report molecule stability, validity, uniqueness, and novelty in \cref{tab:molecule_exp}. We find that our method produces more valid molecules but lower novelty, reflecting a common validity--novelty trade-off in molecule generation. We also note that our method is not compared to the state-of-the-art baselines \citep{le2023navigating, irwin2025semlaflow} which use more parameters and incorporate edge features.
\begin{table}[t]
    \centering
    \caption{QM9 generation metrics (all in \%).}
    \label{tab:molecule_exp}
    \scriptsize
    \setlength{\tabcolsep}{2pt}
    \renewcommand{\arraystretch}{1.05}
    \begin{tabular}{lcccc}
        \toprule
        \textbf{Method} & M-Stab & Val & Uniq & Nov \\
        \midrule
        EDM (VE) & $95.20\!\pm\!0.36$ & $98.13\!\pm\!0.42$ & $\mathbf{99.12\!\pm\!0.15}$ & $\mathbf{63.30\!\pm\!2.41}$ \\
        Ours     & $\mathbf{98.63\!\pm\!0.25}$ & $\mathbf{99.60\!\pm\!0.26}$ & $84.61\!\pm\!3.11$ & $41.70\!\pm\!1.59$ \\
        \bottomrule
    \end{tabular}
\end{table}

\section{Conclusion}\label{sec:conclusion}

In this work, we addressed a central practical bottleneck in drift-free Riemannian diffusion models: training and sampling typically require access to the manifold heat kernel and its gradients, which are rarely available beyond a few simple manifolds. We proposed a general, PDE-based alternative: given an explicit specification of a manifold, we (i) select a global coordinate representation, (ii) derive the corresponding heat (Fokker–Planck) equation in coordinates, (iii) replace the singular Dirac-delta initial condition with a principled short-time asymptotic initializer, and (iv) train a physics-informed neural network (PINN) to approximate the log heat kernel. The resulting surrogate heat kernel supports both forward noising and conditional-score evaluation, enabling denoising score matching and reverse-time sampling on manifolds where closed-form kernels are unavailable.

Empirically, we demonstrated that this \emph{solve-the-heat-equation} viewpoint can serve as a practical backend for manifold diffusion modeling across diverse geometries and data modalities, including density estimation on $S^2$, synthetic modeling on  $SO(3)$, conditional generation on $\SPD(n)$, and quotient-manifold settings such as permutation-invariant point clouds for molecule generation. Overall, the experiments support the thesis that PDE-based heat-kernel approximation can broaden the applicability of Riemannian diffusion models beyond cases where spectral methods or analytic kernels are readily available.

\paragraph{Limitations.} First, training a PINN for the heat kernel can become nontrivial as the effective dimension grows or when the manifold geometry/topology induces a highly structured solution; in such regimes, optimization can be unstable and may require stronger inductive biases (e.g., symmetry-aware architectures or better feature embeddings). Second, the overall pipeline is computationally heavy in practice: PINN training requires many collocation evaluations with higher-order autodifferentiation, and downstream usage may further require repeated evaluations (and sometimes MCMC-style forward sampling), leading to substantial runtime and engineering overhead. Finally, while the method is general, it is also more complicated than alternatives such as (Riemannian) flow matching, which typically bypass heat-kernel estimation entirely and can be simpler to implement when accurate exponential/log maps are available.

\paragraph{Future work.} A key direction is to automate and systematize the full workflow so that applying the method to a new manifold becomes closer to “plug-and-play.” Concretely, this includes more automatic identification/derivation of the governing PDE (e.g., Laplace–Beltrami in chosen coordinates), robust construction of the short-time initializer and boundary/domain specification, and more reliable PINN training protocols. Beyond improving usability and scalability, it is also promising to extend the PDE-surrogate idea to a broader class of generative modeling frameworks—e.g., manifold Schrödinger bridges and controlled diffusions, or hybrids that combine the geometric convenience of flow matching with stochasticity—where learned transition-density surrogates (or their score analogues) could serve as modular building blocks.




\newpage

\section*{Acknowledgements}

This work was partly supported by Institute of Information \& communications Technology Planning \& Evaluation(IITP) grant funded by the Korea government(MSIT) (No.RS-2019-II190075, Artificial Intelligence Graduate School Program(KAIST), No.RS-2024-00509279, Global AI Frontier Lab, and No.RS-2022-II220713, Meta-learning Applicable to Real-world Problems).

\section*{Impact Statement}

This work advances foundational methods for generative modeling on Riemannian manifolds. We do not anticipate direct societal impact from the results presented. While the techniques could be used in downstream applications, they do not introduce new application domains or materially change the known risk profile of modern generative models; no specific ethical concerns beyond standard responsible-use considerations are expected.
\bibliography{main}
\bibliographystyle{icml2026}

\newpage

\appendix
\onecolumn



\section{Mathematical Details}\label{sec:math-details}

\subsection{Fokker-Planck equation in coordinates}\label{subsec:fp-coords-app}

The notation follows from \cref{subsec:fp-coords}.

\paragraph{Gradient, divergence and Laplace-Beltrami.}
For a smooth function $f:\calM\to\sR$, to work in the ambient manifold $\tilde{\calM}$, we use any smooth extension $\bar f$ to a neighborhood of $\calM$ in $\tilde{\calM}$. The \emph{gradient} of $\bar f$ in the ambient manifold $\tilde{\calM}$ is given by
\[
(\widetilde{\grad}\,\bar f)_i=\tilde g^{-1}_{ij}\,\del_j \bar f,
\]
and the gradient of $f$ in the submanifold $\calM$ is obtained by projection:
\begin{equation}
(\grad f)(x)=P(x)\,\widetilde{\grad}\,\bar f(x),\qquad
(\grad f)_i=P_{ij}(x)\,\tilde g^{-1}_{jk}(x)\,\del_k \bar f(x).
\end{equation}
For a smooth tangent vector field $X$ on $\calM$ (so $PX=X$), the \emph{divergence} is the trace over tangent directions:
\begin{equation}
\diver X=\trace\!\bigl(P\,(\tilde\nabla X)\,P\bigr)
= P_{ij}(x)\,(\tilde\nabla_k X)_j\,P_{ki}(x).
\end{equation}
The \emph{Laplace--Beltrami operator} is $\Delta^{\calM}f:=\diver(\grad f)$, hence
\begin{equation}
\Delta^{\calM}f
=\trace\!\Bigl(P\,\tilde\nabla\bigl(P\,\widetilde{\grad}\,\bar f\bigr)\,P\Bigr)
= P_{ij}(x)\,\tilde\nabla_k\!\Bigl(P_{j\ell}(x)\,\tilde g^{-1}_{\ell m}(x)\,\del_m \bar f(x)\Bigr)\,P_{ki}(x).
\end{equation}

\paragraph{Fokker--Planck equation.}
Therefore, the Fokker--Planck equation of the simple Brownian motion is given by
\begin{equation}
\del_t p_{t|0}(x\mid x_0)
=\Delta^{\calM}p_{t|0}(x\mid x_0)
= P_{ij}(x)\,\tilde\nabla_k\!\Bigl(P_{j\ell}(x)\,\tilde g^{-1}_{\ell m}(x)\,\del_m \bar p_{t|0}(x\mid x_0)\Bigr)\,P_{ki}(x).
\end{equation}
Note that, although $p_{t|0}$ is defined only on $\calM$, we evaluate $\Delta^{\calM}p_{t|0}$ by choosing any smooth extension $\bar p_{t|0}$ to a neighborhood of $\calM$ in $\tilde{\calM}$. Any such extension can be used since the projection matrix $P(x)$ projects vectors into the tangent bundle of $\calM$.

\subsection{Minakshisundaram–Pleijel recursion formula}\label{subsec:parametrix}

We follow the notation in \cref{subsec:initial}. The heat kernel has an $n$-th order parametrix expansion \citep{rosenberg1997laplacian}:
\begin{equation}
\begin{split}
     p_t(x|x_0) &= G_t^{(n)}(x, x_0) + O(t^n), \\
    G_t^{(n)}(x, x_0) 
    &:= G_t(x,x_0)  \Big(u_0(x,x_0)+u_1(x, x_0)t + \cdots, u_{n-1}(x,x_0)t^{n-1}\Big).
\end{split}
\end{equation}
where $u_0(x,x_0) = D_{x_0}(x)^{-1/2}$ and $u_1,\cdots,u_{n-1}$ can be computed recursively from earlier ones:
\begin{equation}\label{eq:parametrix-recursion}
    u_i(x,x_0) = r^{-i}D^{-1/2}(x)\int_0^r D^{1/2}(z(s))\Delta_x^{\calM}u_{i-1}(x,z(s))s^{i-1}ds
\end{equation}
where $r = d(x,x_0)$ and $z:[0,r] \rightarrow \calM$ is the unit-speed geodesic from $x_0$ to $x$.

\subsection{\mathwrap{Worked example of Minakshisundaram–Pleijel recursion formula on $S^2$}}\label{subsec:parametrix_s2}

Since the recursion formula \cref{eq:parametrix-recursion} has an integral, it cannot be analytically computed in general. Instead, if the dimension of $\calM$ is sufficiently small or the heat kernel only depends on a small number of variables, we can Taylor-expand the terms $u_i$ and compute the recursion formula, using a Computer Algebra System (CAS), e.g., sympy \citep{meurer2017sympy}.

Take $x_0 = [0,0,1]^{T}$ and for $x \in S^2 \subseteq \sR^3$, let $r = \arccos(x^Tx_0)$ be the geodesic distance between $x_0$ and $x$. Due to the symmetry of $p_{t|0}$, the heat equation only depends on the distance $r$. For a function $f = f(r):S^2 \rightarrow \sR$ that only depends on $r$, the Laplace-Beltrami operator reduces to:
\begin{equation}
    \Delta^{S^2} f = f'' + (\frac{1}{r} + \frac{D'}{D})f'
\end{equation}
where $D(r) = \frac{\sin r}{r}$ is the volume distortion. The recursion formula \cref{eq:parametrix-recursion} reduces to:
\begin{equation}
    u_i = r^{-i}D(r)^{-1/2}\int_0^rD^{1/2}(s)\Delta u_{i-1}(s)s^{i-1}ds
\end{equation}
where $u_0 = D(r)^{-1/2}$. In the sympy environment, we conduct series expansion before computing the integral in $s$. Below we list some examples of computed terms $u_i$ using sympy:
\begin{equation}
\begin{split}
    u_1 &= \frac{9689 r^{8}}{319334400} + \frac{17 r^{6}}{56700} + \frac{31 r^{4}}{10080} + \frac{r^{2}}{30} + \frac{1}{3} \\
    u_2 &= - \frac{1433 r^{8}}{139708800} + \frac{3277 r^{6}}{19958400} + \frac{19 r^{4}}{15120} + \frac{r^{2}}{105} + \frac{1}{15} \\
    u_3 &= - \frac{9619 r^{8}}{1026432000} - \frac{5713 r^{6}}{52390800} + \frac{71 r^{4}}{110880} + \frac{r^{2}}{315} + \frac{4}{315}
\end{split}
\end{equation}
where the series are truncated at 10-th order. The computed parametrix approximation up to third order is used for the initial condition of heat kernel, for $\calM = S^2$ and $SO(3)$.

\subsection{\mathwrap{Heat equation on $SPD(n)$ in log-eigenvalue coordinates}}\label{subsec:heat-spd-app}

Let $\SPD(n)=\{X\in\mathbb{R}^{n\times n} : X=X^\top,\ X\succ 0\}$ equipped with the affine-invariant metric
\begin{equation}
g_X(U,V) \;=\; \trace\!\big(X^{-1}UX^{-1}V\big),
\qquad X\in\SPD(n),\ U,V\in T_X\SPD(n)\cong \Sym(n).
\label{eq:spd-metric-app}
\end{equation}
The congruence action of $GL(n)$, $X\mapsto AXA^\top$, is isometric under \eqref{eq:spd-metric-app}.
In particular, the heat kernel started at $I$ is conjugation-invariant:
$p_t(QXQ^\top\mid I)=p_t(X\mid I)$ for all $Q\in O(n)$, hence it depends only on the eigenvalues of $X$.

\paragraph{Log-eigenvalue coordinates.}
Write the spectral decomposition
\[
X \;=\; Q\,\diag(\lambda_1,\dots,\lambda_n)\,Q^\top,\qquad Q\in O(n),\ \lambda_i>0,
\]
and define log-eigenvalues $r_i:=\log\lambda_i$, so that
\begin{equation}
X \;=\; Q\,\diag(e^{r_1},\dots,e^{r_n})\,Q^\top.
\label{eq:spd-logeig-coords}
\end{equation}
For a conjugation-invariant function $f(X)=\bar f(r)$ (symmetric in $r$), the Laplace--Beltrami operator reduces to its radial part.

\paragraph{Radial volume density.}
In the coordinates $(Q,r)$, the Riemannian volume element factorizes as
\begin{equation}
d\mu_{\SPD}(X) \;=\; C\,J(r)\,dr\,d\mu_{\mathrm{Haar}}(Q),
\qquad
J(r)\;=\;\prod_{1\le i<j\le n}\sinh\!\Big(\frac{r_i-r_j}{2}\Big),
\label{eq:spd-jacobian}
\end{equation}
where $C$ is a constant. It is often convenient to write
\begin{equation}
J(r)
\;=\;
\Big(\prod_{i<j}\frac{r_i-r_j}{2}\Big)\,
D(r),
\qquad
D(r):=\prod_{i<j}\frac{\sinh\!\big(\frac{r_i-r_j}{2}\big)}{\frac{r_i-r_j}{2}}.
\label{eq:spd-D-def-app}
\end{equation}

\paragraph{Radial Laplacian.}
For $f(X)=\bar f(r)$, the radial Laplace--Beltrami operator is
\begin{equation}
(\Delta f)(X)
\;=\;
(\Delta_{\mathrm{rad}}\bar f)(r)
\;:=\;
\frac{1}{J(r)}\sum_{i=1}^n \partial_{r_i}\!\Big(J(r)\,\partial_{r_i}\bar f(r)\Big).
\label{eq:spd-radial-laplacian-div-form}
\end{equation}
Expanding \eqref{eq:spd-radial-laplacian-div-form} yields
\begin{equation}
\Delta_{\mathrm{rad}}\bar f(r)
\;=\;
\sum_{i=1}^n \partial_{r_i}^2\bar f(r)
\;+\;
\sum_{i=1}^n \big(\partial_{r_i}\log J(r)\big)\,\partial_{r_i}\bar f(r).
\label{eq:spd-radial-laplacian-expand}
\end{equation}
Using \eqref{eq:spd-jacobian},
\[
\log J(r)=\sum_{i<j}\log\sinh\!\Big(\frac{r_i-r_j}{2}\Big)
\quad\Longrightarrow\quad
\partial_{r_i}\log J(r)
=
\frac12\sum_{j\neq i}\coth\!\Big(\frac{r_i-r_j}{2}\Big).
\]
Plugging this into \eqref{eq:spd-radial-laplacian-expand} gives the explicit radial operator
\begin{equation}
\Delta_{\mathrm{rad}}\bar f(r)
=
\sum_{i=1}^n \partial_{r_i}^2\bar f(r)
+
\frac12\sum_{i\neq j}\coth\!\Big(\frac{r_i-r_j}{2}\Big)\,\partial_{r_i}\bar f(r).
\label{eq:spd-radial-laplacian-final}
\end{equation}
Thus, the heat equation $\partial_t p_t=\Delta p_t$ becomes
\begin{equation}
\partial_t p_t(r)
=
\sum_{i=1}^n \partial_{r_i}^2 p_t(r)
+
\frac12\sum_{i< j}\coth\!\Big(\frac{r_i-r_j}{2}\Big)\,
\big(\partial_{r_i}-\partial_{r_j}\big)p_t(r).
\label{eq:spd-heat-r}
\end{equation}
after grouping the symmetric terms.


\section{Experiment Details}\label{sec:exp-details}

This section describes the experimental settings.

\paragraph{Architectures and optimization.} Unless noted otherwise, for PINN training we use the modified MLP of \citet{wang2023expert} with manifold-dependent embeddings. We implement the PINN in JAX \citep{jax2018github}, which enables computing higher-order derivatives while staying within practical GPU memory limits. For the denoiser, we use the same architecture as in the baseline experiments. All models are trained with the Adam optimizer and a cosine learning-rate schedule. The initial learning rate is tuned over $\{3\cdot10^{-3}, 1\cdot10^{-3}, 3\cdot10^{-4}, 1\cdot10^{-4}\}$.

\paragraph{Hardware and training speed.} All models are trained on a single NVIDIA GeForce RTX 3090 Ti GPU. All experiments finish within one day, except training the diffusion model on QM9, which takes about 3 days.

\subsection{\mathwrap{Climate science datasets on $S^2$}}\label{subsec:exp-s2-app}

For PINN training, we fix the initial point $x_0 = [0,0,1]^T \in S^2$ and learn the heat kernel conditioned on $x_0$. For an arbitrary $x_0 \in S^2$, the heat kernel $p_{t|0}(\cdot \mid x_0)$ is recovered via an isometry that maps $x_0 \mapsto [0,0,1]^T$. The PINN is a 4-layer modified MLP with 256 hidden units and sinusoidal embeddings. We train the PINN on $[t_0, t_{\max}] = [0.1, 5]$, imposing the initial condition at $t = t_0$. The denoiser is a 5-layer MLP with 512 hidden units.

\subsection{\mathwrap{Synthetic data on $SO(3)$}}\label{subsec:exp-o3-app}

We view $SO(3)$ as the quotient manifold $S^3 / \{\pm I\}$, where $S^3 \subset \sR^4$ is the unit 3-sphere. The PINN and denoiser architectures are identical to \cref{subsec:exp-s2-app}, except for input/output dimensions.

\subsection{\mathwrap{Traffic analysis on $SPD(10)$}}\label{subsec:exp-taxi-app}

Motivated by the symmetric and antisymmetric terms in \cref{eq:heat-spd}, we introduce two permutation-invariant embeddings of $r$:
\begin{equation}
    S_k(r) = \Big(\sum_i |r_i|^k\Big)^{1/k}, \quad
    A_k(r) = \Big(\sum_{i<j} |r_i-r_j|^k\Big)^{1/k},
\end{equation}
where $k$ is sampled uniformly from $[1,5]$. Since $p_{t|0}$ is symmetric in $r = (r_1,\cdots,r_n)$, these embeddings provide a permutation-invariant representation. We combine them with a sinusoidal embedding of $t$ and feed the resulting features into the modified MLP PINN. We train the PINN on $[t_0, t_{\max}] = [0.1, 1]$, since the hyperbolic geometry of $SPD(n)$ can cause diffusion across scales even for small $t$.
For the denoiser, we use SPD-Net introduced in \citet{li2024spd}. This architecture mimics the shape of U-Net \citep{ronneberger2015u} with operations on $SPD(n)$.

\subsection{\mathwrap{Brain connectivity (EEG) analysis on $SPD(n)$}}\label{subsec:exp-eeg-app}

The PINN architecture is identical to \cref{subsec:exp-taxi-app}. For the denoiser, we use a 4-layer MLP with 256 hidden units that takes the upper-triangular entries (including the diagonal) of an SPD matrix as input and outputs an upper-triangular matrix $U$ (including the diagonal). We then map $U \mapsto U^T U$ to ensure positive-definiteness.

\subsection{\mathwrap{Molecule generation on $\sR^{k \times n} / S_n$}}\label{subsec:exp-molecule-app}

Since the heat kernel is permutation-symmetric, we use permutation symmetry as an inductive bias for the PINN. For the initial point $X_0 = [x_0^{(1)},\cdots,x_0^{(n)}]^{T}$ and a perturbed configuration $X = [x^{(1)},\cdots,x^{(n)}]^{T}$, the PINN input is the matrix of relative distances $R = (R_{ij})_{1 \leq i,j \leq n}$, with $R_{ij} = |x^{(i)} - x_0^{(j)}|$. To encode permutation symmetry, we use a Graph Neural Network (GNN) \citep{wu2020comprehensive}, treating diagonal entries as node features and off-diagonal entries as edge features. Node and edge features are updated by MLPs with two hidden layers and 64 hidden units, and edge features are aggregated by summation. We stack two such GNN layers as the PINN.

For the denoiser, following the baseline \citep{hoogeboom2022equivariant}, we use the EGNN architecture \citep{satorras2021n} with 9 layers and 256 hidden units.

\subsection{Runtime analysis}
\label{subsec:time}

We report the runtime overhead of the learned PINN heat-kernel surrogate. All timings are measured on a single NVIDIA GeForce RTX 3090 Ti GPU using the same implementation as in the main experiments. Since the PINN is implemented in JAX, all per-call timings are measured after JIT compilation. We evaluate the average wall-clock time for computing the learned log heat kernel $\log p^{(\theta)}_{t|0}$, its conditional score $\nabla\log p^{(\theta)}_{t|0}$, and one MCMC-based forward sampling step with batch size $128$.

\begin{table}[t]
\centering
\caption{Runtime of the learned PINN heat-kernel surrogate. We report the average wall-clock time for log-density evaluation, score evaluation, and one MCMC-based forward sampling step with batch size $128$.}
\label{tab:pinn_runtime}
\scriptsize
\setlength{\tabcolsep}{3pt}
\renewcommand{\arraystretch}{1.05}
\begin{tabular}{lccc}
\toprule
\textbf{$\calM$} &
 $\log p$ eval. (ms) &
 $\nabla\log p$ eval. (ms) &
 MCMC sample (ms) \\
\midrule
$S^2$ & $0.116$ & $0.178$ & $1.6$ \\
$SO(3)$ & $0.118$ & $0.177$ & $6.4$ \\
$SPD(10)$ & $0.206$ & $0.302$ & $217.7$ \\
$\sR^{k\times n}/S_n$ & $1.488$ & $3.404$ & -- \\
\bottomrule
\end{tabular}
\end{table}

The cost of evaluating the learned log heat kernel and its score is small for all considered manifolds. The main additional overhead comes from MCMC-based forward sampling, especially on $\mathrm{SPD}(10)$, where each proposal involves matrix operations under the affine-invariant geometry. The permutation-quotiented point-cloud experiment does not use the same MCMC forward-sampling procedure, so we omit its MCMC timing.

\section{Statements and proofs}\label{sec:proofs}

\subsection{Stability under a zero PINN residual, with an approximate initializer}\label{subsec:zero_pinn}

\begin{theorem}[Stability of an initialized heat-kernel approximation]
\label{thm:stability_heat_kernel}
Let $(\calM,g)$ be a connected, complete $d$-dimensional Riemannian manifold without boundary,
and let $\Delta$ be the Laplace--Beltrami operator.
Let $p(t,x,y)$ denote the (minimal) heat kernel \citep{grigoryan2009heat}, i.e.\ the unique smooth function on
$(0,\infty)\times\calM\times\calM$ such that for every $y\in\calM$
\[
(\partial_t-\Delta_x)p(t,x,y)=0,\qquad \lim_{t\downarrow 0}p(t,\cdot,y)=\delta_y
\quad\text{(in distributions)},\qquad p(t,x,y)>0,
\]
and satisfying the Chapman--Kolmogorov property
\[
p(t+s,x,z)=\int_{\calM} p(t,x,y)\,p(s,y,z)\,d\vol_g(y),\qquad t,s>0.
\]
Fix $x_0\in\calM$ and abbreviate
\[
p_{t|0}(x):=p(t,x,x_0).
\]
Fix $0<t_0<t_{\max}$ and $r_{\max}>0$, and define the space--time domain
\[
D := [t_0,t_{\max}]\times B(x_0,r_{\max}),\qquad
B(x_0,r_{\max})=\{x\in\calM:\ d_{\calM}(x_0,x)\le r_{\max}\}.
\]

Let $q_{t_0}:\calM\to(0,\infty)$ be measurable and assume the \emph{global relative initialization error}
is uniformly small:
\begin{equation}\label{eq:init_rel_err_global_noncompact}
\sup_{y\in\calM}\left|\frac{q_{t_0}(y)}{p_{t_0|0}(y)}-1\right| \le \varepsilon,
\qquad \text{for some } \varepsilon\in[0,1).
\end{equation}
Define $\tilde p(t,x)$ for $t\ge t_0$ by the heat semigroup representation
\[
\tilde p(t,x):=\int_{\calM} p(t-t_0,x,y)\,q_{t_0}(y)\,d\vol_g(y).
\]
(Under \eqref{eq:init_rel_err_global_noncompact} the integral is finite, since
$q_{t_0}\le (1+\varepsilon)p_{t_0|0}$ and
$\int p(t-t_0,x,y)p_{t_0|0}(y)\,dy=p_{t|0}(x)<\infty$.)
Then $\tilde p$ is a (mild) solution of $\partial_t\tilde p=\Delta \tilde p$ on $(t_0,\infty)\times\calM$,
and is smooth for every $t>t_0$.

Then the following hold (stated only on $D$, although (1) in fact holds for all $x\in\calM$).

\paragraph{(1) Propagation of the relative error (on $D$).}
For all $(t,x)\in D$,
\begin{equation}\label{eq:pt_stability_local_noncompact}
\left|\frac{\tilde p(t,x)}{p_{t|0}(x)}-1\right|\le \varepsilon,
\quad\text{equivalently}\quad
|\tilde p(t,x)-p_{t|0}(x)|\le \varepsilon\, p_{t|0}(x),
\end{equation}
and in particular $(1-\varepsilon)p_{t|0}(x)\le \tilde p(t,x)\le (1+\varepsilon)p_{t|0}(x)$ on $D$.

\paragraph{(2) Log-density stability (on $D$).}
For all $(t,x)\in D$,
\begin{equation}\label{eq:log_stability_local_noncompact}
\bigl|\log \tilde p(t,x)-\log p_{t|0}(x)\bigr|
\le \max\{-\log(1-\varepsilon),\ \log(1+\varepsilon)\}
\le \frac{\varepsilon}{1-\varepsilon}.
\end{equation}

\paragraph{(3) Score stability away from the initialization time (on $D_\delta$).}
Fix any $\delta\in(0,t_{\max}-t_0]$ and define
\[
D_\delta := [t_0+\delta,t_{\max}]\times B(x_0,r_{\max}).
\]
Define
\begin{align}
\bar A_\delta
&:= \sup_{\substack{t\in[t_0+\delta,t_{\max}]\\ x\in B(x_0,r_{\max})}}
\frac{1}{p_{t|0}(x)}
\int_{\calM}\|\nabla_x p(t-t_0,x,y)\|\,p_{t_0|0}(y)\,d\vol_g(y),
\label{eq:Abar_def}\\
B_\delta
&:= \sup_{\substack{t\in[t_0+\delta,t_{\max}]\\ x\in B(x_0,r_{\max})}}
\|\nabla_x \log p_{t|0}(x)\|.
\label{eq:Bdef_local_noncompact}
\end{align}
Assume $\bar A_\delta<\infty$ (this is an integrability/regularity condition on the heat kernel gradients;
it holds under standard heat kernel gradient bounds, e.g.\ Li--Yau-type estimates \citep{li1986parabolic}, on many complete manifolds).
Then for all $(t,x)\in D_\delta$,
\begin{equation}\label{eq:score_stability_local_noncompact}
\|\nabla_x \log \tilde p(t,x)-\nabla_x \log p_{t|0}(x)\|
\le \frac{\varepsilon}{1-\varepsilon}\,\bigl(\bar A_\delta+B_\delta\bigr).
\end{equation}
\end{theorem}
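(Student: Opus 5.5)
The whole argument rests on positivity of the heat kernel and on the Chapman--Kolmogorov identity, which together make the semigroup representation of $\tilde p$ order-preserving. Write $q_{t_0}=p_{t_0|0}+\varepsilon_{t_0}$ with $|\varepsilon_{t_0}|\le\varepsilon\,p_{t_0|0}$ pointwise, which is exactly \eqref{eq:init_rel_err_global_noncompact}. For $t>t_0$ we then have
\[
\tilde p(t,x)-p_{t|0}(x)=\int_{\calM}p(t-t_0,x,y)\,\varepsilon_{t_0}(y)\,d\vol_g(y).
\]
This uses $p_{t|0}(x)=\int p(t-t_0,x,y)\,p_{t_0|0}(y)\,dy$, which is Chapman--Kolmogorov together with the symmetry $p(s,x,y)=p(s,y,x)$ of the minimal heat kernel. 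Since $p>0$, taking absolute values inside the integral gives $|\tilde p-p_{t|0}|\le \varepsilon\int p(t-t_0,x,y)p_{t_0|0}(y)\,dy=\varepsilon\,p_{t|0}(x)$. At $t=t_0$ the bound is just the hypothesis. This proves (1) for all $x\in\calM$, and in particular on $D$. Finiteness of the integrals and the mild-solution and smoothness claims follow by domination by $(1+\varepsilon)p_{t_0|0}$ and the standard smoothing of the heat semigroup (Grigor'yan); I will only cite these.

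For (2), (1) gives $1-\varepsilon\le \tilde p/p_{t|0}\le 1+\varepsilon$ with $1-\varepsilon>0$. Applying the monotone function $\log$ yields the first inequality. The second follows from the elementary bounds $\log(1+\varepsilon)\le\varepsilon\le \varepsilon/(1-\varepsilon)$ and $-\log(1-\varepsilon)=\log\bigl(1+\tfrac{\varepsilon}{1-\varepsilon}\bigr)\le \tfrac{\varepsilon}{1-\varepsilon}$.

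For (3), fix $(t,x)\in D_\delta$. First I differentiate under the integral sign. This is justified because $\bar A_\delta<\infty$ provides an integrable dominating function $\|\nabla_x p(t-t_0,x,y)\|\,(1+\varepsilon)p_{t_0|0}(y)$, locally uniformly in $x$; getting uniformity requires slightly more than pointwise finiteness, and I would handle it via local gradient estimates, or simply assume it as part of the regularity hypothesis. The same reasoning gives $\nabla\tilde p-\nabla p_{t|0}=\int\nabla_x p(t-t_0,x,y)\,\varepsilon_{t_0}(y)\,dy$, so
\[
\|\nabla\tilde p-\nabla p_{t|0}\|\le\varepsilon\,\bar A_\delta\,p_{t|0}(x).
\]
Next I use the decomposition
\[
\nabla\log\tilde p-\nabla\log p_{t|0}=\frac{\nabla\tilde p-\nabla p_{t|0}}{\tilde p}+\nabla p_{t|0}\Bigl(\frac1{\tilde p}-\frac1{p_{t|0}}\Bigr).
\]
The first term is at most $\varepsilon\bar A_\delta\,p_{t|0}/\tilde p\le \varepsilon\bar A_\delta/(1-\varepsilon)$. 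The second term equals $\|\nabla\log p_{t|0}\|\cdot|p_{t|0}-\tilde p|/\tilde p\le B_\delta\,\varepsilon/(1-\varepsilon)$. Adding the two bounds gives \eqref{eq:score_stability_local_noncompact}.

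The only delicate step is the justification of differentiation under the integral on a noncompact manifold. Everything else is bookkeeping driven by positivity of the kernel.
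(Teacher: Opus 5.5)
Your proposal is correct and is essentially the paper's proof. The difference $\tilde p-p_{t|0}$ is written via Chapman--Kolmogorov and positivity of the kernel gives (1); taking logarithms gives (2); and (3) follows by differentiating under the integral, justified via $\bar A_\delta<\infty$ exactly as in the paper, whose subtlety you rightly flag, then applying the same two-term splitting of $\nabla\log\tilde p-\nabla\log p_{t|0}$. Two small remarks: kernel symmetry is not needed, since $p_{t_0|0}(y)=p(t_0,y,x_0)$ lets Chapman--Kolmogorov apply directly; and the paper additionally records that $B_\delta<\infty$ by continuity on the compact set $D_\delta$ (Hopf--Rinow), which you leave implicit.
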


\begin{proof}
We use only positivity and the Chapman--Kolmogorov property.

\paragraph{Step 1: Semigroup/Chapman--Kolmogorov formulas.}
By definition,
\[
\tilde p(t,x)=\int_{\calM} p(t-t_0,x,y)\,q_{t_0}(y)\,dy,\qquad t\ge t_0.
\]
By Chapman--Kolmogorov,
\[
p_{t|0}(x)=p(t,x,x_0)=\int_{\calM} p(t-t_0,x,y)\,p(t_0,y,x_0)\,dy
=\int_{\calM} p(t-t_0,x,y)\,p_{t_0|0}(y)\,dy.
\]
Hence for $t\ge t_0$,
\begin{equation}\label{eq:diff_integral_noncompact}
\tilde p(t,x)-p_{t|0}(x)
=\int_{\calM} p(t-t_0,x,y)\,\bigl(q_{t_0}(y)-p_{t_0|0}(y)\bigr)\,dy.
\end{equation}

\paragraph{Step 2: Proof of (1) (on $D$).}
Assumption \eqref{eq:init_rel_err_global_noncompact} is equivalent to
\[
|q_{t_0}(y)-p_{t_0|0}(y)| \le \varepsilon\, p_{t_0|0}(y)\qquad\text{for all }y\in\calM.
\]
Using \eqref{eq:diff_integral_noncompact}, positivity of $p$, and linearity,
for any $t\in[t_0,t_{\max}]$ and any $x\in\calM$ (in particular $x\in B(x_0,r_{\max})$),
\begin{align*}
|\tilde p(t,x)-p_{t|0}(x)|
&\le \int_{\calM} p(t-t_0,x,y)\,|q_{t_0}(y)-p_{t_0|0}(y)|\,dy\\
&\le \varepsilon \int_{\calM} p(t-t_0,x,y)\,p_{t_0|0}(y)\,dy\\
&= \varepsilon\, p_{t|0}(x).
\end{align*}
This yields \eqref{eq:pt_stability_local_noncompact} on $D$, and the two-sided bound
$(1-\varepsilon)p_{t|0}\le \tilde p \le (1+\varepsilon)p_{t|0}$ follows immediately.

\paragraph{Step 3: Proof of (2).}
From (1), for all $(t,x)\in D$ we have
\[
1-\varepsilon \le \frac{\tilde p(t,x)}{p_{t|0}(x)} \le 1+\varepsilon.
\]
Taking logs gives
\[
\log(1-\varepsilon)\le \log \tilde p(t,x)-\log p_{t|0}(x)\le \log(1+\varepsilon),
\]
hence \eqref{eq:log_stability_local_noncompact}. The final bound
$\max\{-\log(1-\varepsilon),\log(1+\varepsilon)\}\le \varepsilon/(1-\varepsilon)$
is a standard calculus inequality for $\varepsilon\in[0,1)$.

\paragraph{Step 4: Proof of (3): gradient bound for $\tilde p-p_{t|0}$.}
Fix $\delta>0$ and consider $(t,x)\in D_\delta$ so that $s:=t-t_0\in[\delta,t_{\max}-t_0]$.
For $s\ge\delta>0$ the kernel $p(s,x,y)$ is smooth in $(x,y)$, and (by the assumption $\bar A_\delta<\infty$)
the function $y\mapsto \|\nabla_x p(s,x,y)\|\,p_{t_0|0}(y)$ is integrable uniformly over
$x\in B(x_0,r_{\max})$ and $s\in[\delta,t_{\max}-t_0]$. Therefore we may differentiate
\eqref{eq:diff_integral_noncompact} under the integral sign to obtain
\[
\nabla_x\bigl(\tilde p(t,x)-p_{t|0}(x)\bigr)
=\int_{\calM}\nabla_x p(s,x,y)\,\bigl(q_{t_0}(y)-p_{t_0|0}(y)\bigr)\,dy.
\]
Using \eqref{eq:init_rel_err_global_noncompact} and the definition \eqref{eq:Abar_def},
\begin{align*}
\|\nabla_x(\tilde p(t,x)-p_{t|0}(x))\|
&\le \int_{\calM}\|\nabla_x p(s,x,y)\|\,|q_{t_0}(y)-p_{t_0|0}(y)|\,dy\\
&\le \varepsilon\int_{\calM}\|\nabla_x p(s,x,y)\|\,p_{t_0|0}(y)\,dy\\
&\le \varepsilon\,\bar A_\delta\,p_{t|0}(x),
\qquad (t,x)\in D_\delta.
\end{align*}

\paragraph{Step 5: Convert to a score bound.}
For $(t,x)\in D_\delta$,
\[
\nabla_x\log\tilde p - \nabla_x\log p_{t|0}
= \frac{\nabla_x(\tilde p-p_{t|0})}{\tilde p}
+ \nabla_x\log p_{t|0}\,\frac{p_{t|0}-\tilde p}{\tilde p}.
\]
Taking norms and using (1), we have $\tilde p \ge (1-\varepsilon)p_{t|0}$ and
$|p_{t|0}-\tilde p|\le \varepsilon p_{t|0}$ on $D_\delta$. Hence,
\[
\left\|\frac{\nabla_x(\tilde p-p_{t|0})}{\tilde p}\right\|
\le \frac{\varepsilon\,\bar A_\delta\,p_{t|0}}{(1-\varepsilon)p_{t|0}}
= \frac{\varepsilon}{1-\varepsilon}\bar A_\delta,
\]
and, using the definition \eqref{eq:Bdef_local_noncompact},
\[
\left\|\nabla_x\log p_{t|0}\,\frac{p_{t|0}-\tilde p}{\tilde p}\right\|
\le B_\delta \,\frac{\varepsilon p_{t|0}}{(1-\varepsilon)p_{t|0}}
= \frac{\varepsilon}{1-\varepsilon}B_\delta.
\]
Combining yields \eqref{eq:score_stability_local_noncompact}.

\paragraph{Step 6: Finiteness of $B_\delta$.}
Since $(\calM,g)$ is complete, Hopf--Rinow implies the closed ball $B(x_0,r_{\max})$ is compact.
Because $p_{t|0}(x)$ is smooth and strictly positive for $t\ge t_0+\delta>0$,
the map $(t,x)\mapsto \|\nabla_x\log p_{t|0}(x)\|$ is continuous on the compact set $D_\delta$,
hence $B_\delta<\infty$.
This completes the proof.
\end{proof}

\begin{remark}[On the role of $\bar A_\delta$]
In the compact case one may bound
$\|\nabla_x p(s,x,y)\| \le A_\delta\,p(s,x,y)$ with a finite uniform constant by taking a supremum over $y\in\calM$.
On a non-compact manifold this uniform ratio typically fails (already on $\bbR^d$),
so we replace it by the averaged quantity $\bar A_\delta$ in \eqref{eq:Abar_def}.
On many classes of complete manifolds (e.g.\ under Ricci lower bounds plus standard heat kernel
Gaussian/gradient estimates), $\bar A_\delta$ is finite on $D_\delta$.
\end{remark}

\subsection{Stability under a nonzero PINN residual}
\label{subsec:nonzero_pinn}

\cref{thm:stability_heat_kernel} analyzes the idealized setting where the approximate initial condition is evolved exactly under the heat equation. In practice, however, the learned PINN is only an approximate solver: it is trained to match the initial/boundary condition and to make the PDE residual small, but the residual is not identically zero. We therefore give a complementary stability statement showing that a uniformly small residual for the log heat equation leads to a controlled log-density error.

\begin{theorem}[Stability under a nonzero log-PDE residual]
\label{thm:pinn_residual_stability}
Let $Q=[t_0,t_{\max}]\times\Omega$ be a compact space-time domain, where $\Omega\subset M$ is a spatial domain with smooth boundary. Let $\partial_p Q$ denote the parabolic boundary,
\begin{equation}
\partial_p Q
=
\bigl({t_0}\times\Omega\bigr)
\cup
\bigl([t_0,t_{\max}]\times\partial\Omega\bigr).
\end{equation}
When $\Omega=M$ and $M$ is compact without boundary, the lateral boundary term is absent. Let $p=e^\phi$ be the exact heat kernel on $Q$, so that
\begin{equation}
\partial_t p=\Delta_M p,
\qquad
\phi=\log p.
\end{equation}
Equivalently, $\phi$ satisfies
\begin{equation}
\mathcal{R}[\phi]
:=
\partial_t\phi-\Delta_M\phi-|\nabla\phi|_g^2
=
0.
\end{equation}
Let $\hat{\phi}$ be a smooth PINN approximation and define $\hat{p}=e^{\hat{\phi}}$. Suppose that, for some $\varepsilon_R,\varepsilon_B\geq0$,
\begin{equation}
|\mathcal{R}[\hat{\phi}](t,x)|
\leq
\varepsilon_R,
\qquad
(t,x)\in Q,
\end{equation}
and
\begin{equation}
|\hat{\phi}(t,x)-\phi(t,x)|
\leq
\varepsilon_B,
\qquad
(t,x)\in\partial_p Q.
\end{equation}
Then, for every $(t,x)\in Q$,
\begin{equation}
e^{-\varepsilon_B-\varepsilon_R(t-t_0)}p(t,x)
\leq
\hat{p}(t,x)
\leq
e^{\varepsilon_B+\varepsilon_R(t-t_0)}p(t,x).
\end{equation}
Equivalently,
\begin{equation}
|\hat{\phi}(t,x)-\phi(t,x)|
\leq
\varepsilon_B+\varepsilon_R(t-t_0)
\leq
\varepsilon_B+\varepsilon_R(t_{\max}-t_0).
\end{equation}
\end{theorem}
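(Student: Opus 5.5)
The plan is a comparison-principle (maximum-principle) argument for the nonlinear log-heat operator $\mathcal{R}$, applied to the shifted functions
\[
\psi_\pm(t,x):=\phi(t,x)\pm\bigl(\varepsilon_B+\varepsilon_R(t-t_0)\bigr).
\]
Since $\mathcal{R}[\phi]=0$ and adding a function of $t$ alone does not change $\nabla\phi$ or $\Delta_M\phi$, we get $\mathcal{R}[\psi_\pm]=\pm\varepsilon_R$. Thus $\psi_+$ is a supersolution with $\mathcal{R}[\psi_+]\ge \mathcal{R}[\hat\phi]$, and $\psi_-$ is a subsolution with $\mathcal{R}[\psi_-]\le\mathcal{R}[\hat\phi]$, both on all of $Q$. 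On $\partial_pQ$ the boundary hypothesis gives $\psi_-\le\hat\phi\le\psi_+$. The goal is to propagate these inequalities into $Q$; exponentiating then yields the two-sided bound on $\hat p$.

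For the upper bound, set $w:=\hat\phi-\psi_+$. Subtracting the residual expressions gives
\[
\partial_t w-\Delta_M w-\langle \nabla\hat\phi+\nabla\psi_+,\nabla w\rangle_g=\mathcal{R}[\hat\phi]-\mathcal{R}[\psi_+]\le \varepsilon_R-\varepsilon_R=0,
\]
using $|\nabla\hat\phi|^2-|\nabla\psi_+|^2=\langle\nabla\hat\phi+\nabla\psi_+,\nabla w\rangle$. So $w$ is a subsolution of a linear parabolic equation with smooth drift $b:=\nabla\hat\phi+\nabla\phi$, and $w\le0$ on $\partial_pQ$. By the weak maximum principle, $w\le 0$ on $Q$.

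To make this rigorous without strictness, I will replace $w$ by $w_\eta:=w-\eta(t-t_0)$ for $\eta>0$. This satisfies the strict inequality $\partial_tw_\eta-\Delta w_\eta-\langle b,\nabla w_\eta\rangle\le-\eta<0$. Suppose $w_\eta$ had a positive maximum on the compact set $Q$. It cannot occur on $\partial_pQ$, so it occurs at an interior point or at $t=t_{\max}$. There $\nabla w_\eta=0$, $\Delta w_\eta\le0$ and $\partial_tw_\eta\ge0$, which contradicts the strict inequality. Letting $\eta\to0$ gives $w\le0$. The lower bound is symmetric, applying the same argument to $\psi_--\hat\phi$.

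The main obstacle is regularity of $\phi$, hence of the drift $b$, up to the boundary of $Q$. The drift must be bounded and continuous for the maximum-point argument to be clean. On $[t_0,t_{\max}]$ with $t_0>0$ the heat kernel is smooth and positive, so $\phi$ is smooth on the compact set $Q$, and $b$ is continuous and bounded there. This is exactly why the positivity of $p$ from \cref{thm:stability_heat_kernel} matters. In the compact boundaryless case with $\Omega=M$, the same argument runs with no lateral boundary, since the maximum over the compact manifold is attained at points where the spatial first- and second-order conditions hold.
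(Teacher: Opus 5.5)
Your argument is correct. It proves the same inequality as the paper, but carries out the comparison in log variables instead of density variables.

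The paper never linearizes the nonlinear operator. It uses the Hopf--Cole-type identity $(\partial_t-\Delta_M)\hat p=\hat p\,\mathcal{R}[\hat\phi]$ to turn the residual bound into $-\varepsilon_R\hat p\le(\partial_t-\Delta_M)\hat p\le\varepsilon_R\hat p$. It then rescales to $u^\pm=e^{\mp\varepsilon_R(t-t_0)}\hat p$ and compares $u^+$ with $e^{\varepsilon_B}p$ (and $u^-$ with $e^{-\varepsilon_B}p$). This uses the ordinary comparison principle for the heat equation, which has no first-order term.

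The two proofs are linked by $u^+-e^{\varepsilon_B}p=e^{\phi+\varepsilon_B}(e^{w}-1)$, where $w=\hat\phi-\psi_+$ is your function. So both establish exactly the same inequality; what differs is which maximum principle is needed.

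\begin{itemize}
\item The paper's route removes the nonlinearity entirely and can cite the textbook drift-free comparison principle, which it does without proof.
\item Your route stays in the variable the PINN actually outputs. You pay for the nonlinearity with the drift $b=\nabla\hat\phi+\nabla\phi$, but your proof is self-contained because you prove the maximum principle directly via the $\eta$-perturbation.
\end{itemize}

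One small correction to your last paragraph: boundedness of the drift is not actually the obstacle. With the strict-subsolution trick, $b$ enters only through $\langle b,\nabla w_\eta\rangle_g$, and this vanishes at the maximum point. What you need is that $\phi$ and $\hat\phi$ are $C^{1,2}$ on $Q$, which holds because $p$ is smooth and positive for $t\ge t_0>0$.
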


\begin{proof}
A direct calculation gives the key identity
\begin{equation}
\label{eq:residual-to-density-error}
(\partial_t-\Delta_M)\hat{p}
=
\hat{p}
\bigl(
\partial_t\hat{\phi}
-
\Delta_M\hat{\phi}
-
|\nabla\hat{\phi}|_g^2
\bigr)
=
\hat{p},\mathcal{R}[\hat{\phi}].
\end{equation}
Hence the residual bound implies
\begin{equation*}
-\varepsilon_R\hat{p}
\leq
(\partial_t-\Delta_M)\hat{p}
\leq
\varepsilon_R\hat{p}.
\end{equation*}
Define the exponentially rescaled functions
\begin{equation*}
u^+(t,x)
=
e^{-\varepsilon_R(t-t_0)}\hat{p}(t,x),
\qquad
u^-(t,x)
=
e^{\varepsilon_R(t-t_0)}\hat{p}(t,x).
\end{equation*}
Then $u^+$ and $u^-$ are respectively a subsolution and a supersolution:
\begin{equation}
\label{eq:sub-super-solutions}
(\partial_t-\Delta_M)u^+
\leq
0,
\qquad
(\partial_t-\Delta_M)u^-
\geq
0.
\end{equation}
On $\partial_p Q$, the boundary assumption gives
\begin{equation*}
e^{-\varepsilon_B}p
\leq
\hat{p}
\leq
e^{\varepsilon_B}p.
\end{equation*}
Since $p$ exactly solves the heat equation, the parabolic comparison principle applied to \cref{eq:sub-super-solutions} yields
\begin{equation*}
u^+(t,x)
\leq
e^{\varepsilon_B}p(t,x),
\qquad
u^-(t,x)
\geq
e^{-\varepsilon_B}p(t,x)
\end{equation*}
throughout $Q$. Rearranging gives the density bound
\begin{equation}
\label{eq:pinn-residual-density-bound}
e^{-\varepsilon_B-\varepsilon_R(t-t_0)}p(t,x)
\leq
\hat{p}(t,x)
\leq
e^{\varepsilon_B+\varepsilon_R(t-t_0)}p(t,x).
\end{equation}
Taking logarithms gives
\begin{equation*}
|\hat{\phi}(t,x)-\phi(t,x)|
\leq
\varepsilon_B+\varepsilon_R(t-t_0),
\end{equation*}
which proves the claimed log-density bound.
\end{proof}


\end{document}